\documentclass[journal]{IEEEtai}

\usepackage[colorlinks,urlcolor=blue,linkcolor=blue,citecolor=blue]{hyperref}

\usepackage{color,array}

\usepackage{graphicx}

\usepackage{url}            % simple URL typesetting
\usepackage{booktabs}       % professional-quality tables
\usepackage{amsfonts}       % blackboard math symbols
\usepackage{nicefrac}       % compact symbols for 1/2, etc.
\usepackage{microtype}      % microtypography
\usepackage{xcolor}         % colors
\usepackage{float}          % table position
\usepackage{adjustbox}
\usepackage{subcaption}
\usepackage{colortbl}
\usepackage{threeparttable}
\usepackage{bbding}
\usepackage{verbatim}
\usepackage{bm}
\usepackage{multirow}
\usepackage{tabularx}
\usepackage{pifont}
\usepackage{amsmath}
\usepackage{algpseudocode}
\usepackage{amssymb}
\usepackage{hyperref}

\newtheorem{theorem}{Theorem}

\begin{document}

\title{A Simple Efficiency Incremental Learning Framework via Vision-Language Model with Nonlinear Multi-Adapters}
%A Simple Efficiency Incremental Learning Framework via Vision-Language Model with Nonlinear Multi-Adapters
%Simple Efficiency Incremental Learning via Vision-Language Model with Multi-Adapters
%\author{
\author{Haihua Luo*, Xuming Ran*, Jiangrong Shen, Timo Hämäläinen, Zhonghua Chen, Qi Xu, Fengyu Cong
% \author{First A. Author, \IEEEmembership{Fellow, IEEE}, Second B. Author, and Third C. Author, Jr., \IEEEmembership{Member, IEEE}

% \thanks{This paragraph of the first footnote will contain the date on which you submitted your paper for review. It will also contain support information, including sponsor and financial support acknowledgment. For example, ``This work was supported in part by the U.S. Department of Commerce under Grant BS123456.'' }
% \thanks{The next few paragraphs should contain the authors' current affiliations, including current address and e-mail. For example, F. A. Author is with the National Institute of Standards and Technology, Boulder, CO 80305 USA (e-mail: author@boulder.nist.gov).}
% \thanks{S. B. Author, Jr., was with Rice University, Houston, TX 77005 USA. He is now with the Department of Physics, Colorado State University, Fort Collins, CO 80523 USA (e-mail: author@lamar.colostate.edu).}
% \thanks{T. C. Author is with the Electrical Engineering Department, University of Colorado, Boulder, CO 80309 USA, on leave from the National Research Institute for Metals, Tsukuba, Japan (e-mail: author@nrim.go.jp).}
% \thanks{This paragraph will include the Associate Editor who handled your paper.}}

\thanks{H. Luo, T. Hämäläinen and Z. Chen are with the Faculty of Information Technology, University of Jyväskylä, Jyväskylä, Finland (e-mail: {haihua.h.luo, timo.t.hamalainen, zhonghua.x.chen}@jyu.fi).}
\thanks{X. Ran is with National University of Singapore, Singapore, Singapore (e-mail: {ranxuming}@gmail.com).}
\thanks{J. Shen is with Lab of Brain-Machine Intelligence, Zhejiang University, Zhejiang, China (e-mail: {jiangrong}@zju.edu.cn).}
\thanks{Q. Xu and F. Cong are with School of Computer Science and Technology, Dalian University of Technology, Dalian, China (e-mail: {xuqi, cong}@dlut.edu.cn).}
\thanks{* Equal contribution}
\thanks{Correspondence to: Qi Xu ({xuqi}@dlut.edu.cn)}
}

% \markboth{Journal of IEEE Transactions on Artificial Intelligence, Vol. 00, No. 0, Month 2020}
% {First A. Author \MakeLowercase{\textit{et al.}}: Bare Demo of IEEEtai.cls for IEEE Journals of IEEE Transactions on Artificial Intelligence}

\maketitle

\begin{abstract}
Incremental Learning (IL) aims to learn new tasks while preserving previously acquired knowledge. Integrating the zero-shot learning capabilities of pre-trained vision-language models into IL methods has marked a significant advancement. However, these methods face three primary challenges: (1) the need for improved training efficiency; (2) reliance on a memory bank to store previous data; and (3) the necessity of a strong backbone to augment the model's capabilities. In this paper, we propose \textbf{SimE}, a \textbf{Sim}ple and \textbf{E}fficient framework that employs a vision-language model with adapters designed specifically for the IL task. We report a remarkable phenomenon: there is a nonlinear correlation between the number of adaptive adapter connections and the model's IL capabilities. While increasing adapter connections between transformer blocks improves model performance, adding more adaptive connections within transformer blocks during smaller incremental steps does not enhance, and may even degrade the model's IL ability. Extensive experimental results show that SimE surpasses traditional methods by 9.6\% on TinyImageNet and outperforms other CLIP-based methods by 5.3\% on CIFAR-100. Furthermore, we conduct a systematic study to enhance the utilization of the zero-shot capabilities of CLIP. We suggest replacing SimE's encoder with a CLIP model trained on larger datasets (e.g., LAION2B) and stronger architectures (e.g., ViT-L/14).

\end{abstract}

\begin{IEEEImpStatement}
Incremental learning (IL) is crucial for AI systems to adapt to new data without forgetting past knowledge, a challenge known as catastrophic forgetting. Current IL methods often suffer from high computational costs, reliance on large memory banks to store old data, and limited performance. This paper introduces SimE, a simple and efficient framework that leverages pre-trained vision-language models with lightweight adapters. SimE achieves state-of-the-art performance on standard benchmarks while requiring only a fraction of the trainable parameters and no memory bank. This breakthrough enables the development of more sustainable and scalable AI systems that can learn continuously in real-world scenarios, such as autonomous driving, medical diagnosis, and personalized robotics, with significantly reduced computational and memory footprints.
\end{IEEEImpStatement}

\begin{IEEEkeywords}
Incremental Learning, CLIP, Adapter, 
\end{IEEEkeywords}

\section{Introduction}
\label{sec:intro}
Deep learning models have achieved significant success when fully trained on domain-specific tasks. However, in real-world scenarios, new data often come from diverse sources. Training a deep learning model on such new data typically lead to the model forgetting previously learned information—a phenomenon known as catastrophic forgetting \cite{goodfellow2013empirical}. To address this issue, Incremental Learning (IL) methods have been proposed, drawing inspiration from the human ability to learn continuously. These methods enable models to preserve existing knowledge while acquiring new skills \cite{de2021continual, masana2022class}. Traditional IL approaches, which start training from scratch \cite{li2017learning, serra2018overcoming, rebuffi2017icarl}, fail to leverage the zero-shot learning capabilities of pre-trained vision-language models. For example, Contrastive Language-Image Pre-training (CLIP) models \cite{radford2021learning}, trained on extensive datasets, exhibit strong feature extraction abilities.  Consequently, integrating CLIP's zero-shot learning capabilities into continual learning approaches has become a subject of keen interest \cite{thengane2022clip, ding2022don, zheng2023preventing, zhou2022learning, wang2023attriclip, yu2024boosting}.

Despite the success of recent CLIP-based IL methods, several challenges remain. For example, the CoOP framework \cite{zhou2022learning} preserves historical knowledge by utilizing a memory bank that is periodically accessed and updated during IL tasks. However, the ever-expanding volume of accumulated data can overburden the memory bank, thereby constraining CoOP's capacity for lifelong learning. In contrast, Continual CLIP \cite{thengane2022clip} leverages a frozen pre-trained CLIP to facilitate the model's continual learning capability, eliminating the need for replay memory but also limiting CLIP's zero-shot capabilities. Additionally, ZSCL \cite{zheng2023preventing} employs parameter regularization through knowledge distillation to maintain the model's performance across IL tasks. However, ZSCL is not entirely efficient for IL endeavors, as it requires a substantial number of finetuning parameters to learn new data features and demands significant GPU resources during training. Moreover, the efficacy of CLIP's feature extraction is significantly influenced by the pre-trained datasets and the size of its backbone architecture (e.g., ViT). Despite this, there has been no systematic exploration into optimizing CLIP's zero-shot learning potential, which also affects the performance of IL methods. Collectively, the main challenges faced by these approaches include: \textbf{(1) the need for enhanced training efficiency}; \textbf{(2) reliance on a memory bank to store previous data}; and \textbf{(3) the need for a robust backbone to enhance the model's capabilities.}

\begin{figure*}[t]
\centering
\includegraphics[width=0.99\textwidth]{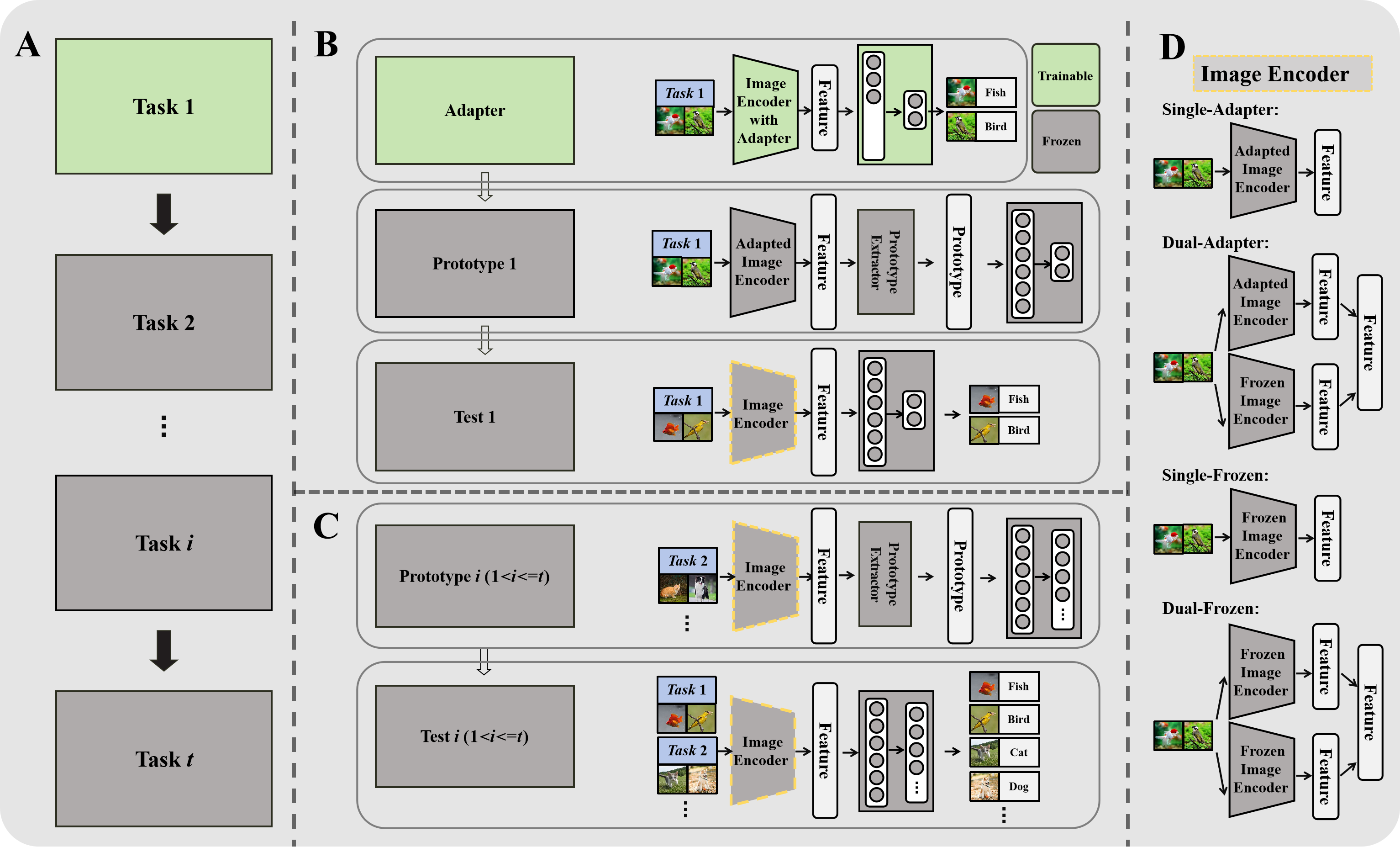}
\caption{\textbf{The overall framework of SimE.} The green represents trainable and the grey denotes frozen components. \textbf{A}) illustrates the incremental learning tasks, which include t tasks. Specifically, we finetune the trainable parameters in SimE for task 1, while freezing all the parameters in SimE for the remaining tasks.  \textbf{B}) The learning process for Task 1 can be divided into three stages: in the Adapter stage, the image encoder is finetuned using adapters; in the Prototype 1 stage, prototypes are computed based on the finetuned image encoder, and the classifier is updated; in the Test 1 stage, the classification performance of the model is evaluated. \textbf{C}) In the computation process for subsequent tasks $i$ ($1 < i < t$), all weights are frozen, only the prototypes are computed, and the classifier is updated. \textbf{D}) depicts the architectures of various image encoders.
}
\label{fig:framework}
\end{figure*}

To address these challenges, we introduce \textbf{SimE} (see Fig.\ref{fig:framework}), a Simple and Efficient IL framework that combines a vision-language model with an adapter designed for efficient IL tasks. Adapter \cite{houlsby2019parameter, chen2022adaptformer} is a lightweight module inserted into transformer blocks, enabling finetuning of the pre-trained model using minimal parameters. During training, the pre-trained model's parameters are frozen; we finetune only the adapter's trainable parameters, enhancing the model's parameter efficiency and adaptability without requiring a memory bank. We conduct a comprehensive evaluation of various backbones and pre-trained datasets to ascertain the most effective CLIP configurations for IL tasks using SimE. CLIP offers a spectrum of backbones, ranging from base to large models, as described by \cite{radford2021learning}, each with its own set of parameters. Additionally, the scope of pre-trained datasets is vast, as evidenced by works such as \cite{gadre2024datacomp} and \cite{cherti2023reproducible}, which span from 400 million to 2 billion samples. Our systematic investigation delves into the influence of these disparate backbones and pre-trained datasets on the performance of CLIP in IL scenarios.

\begin{figure*}[htb]
\centering
\includegraphics[width=1\textwidth]{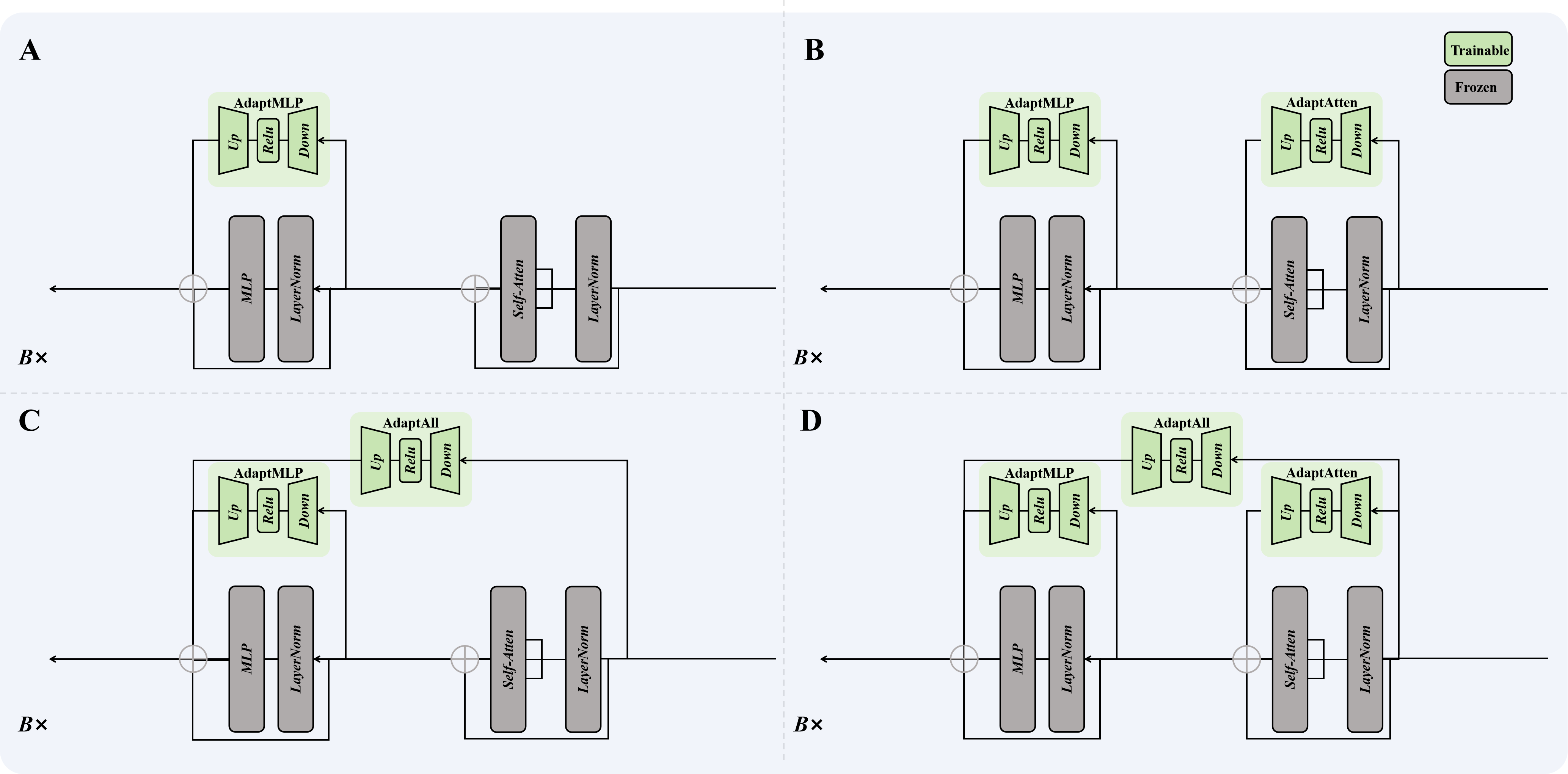}
\caption{\textbf{Comparison of previous and current finetuning approaches: The previous approach, AdaptFormer (A), is contrasted with our Multi-Adapter finetuning (B, C, and D).} The modules colored in green are trainable, while those in gray are frozen. In AdaptFormer and Multi-Adapter, the AdaptMLP, AdaptAtten, and AdaptAll modules are parameterized by a bottom-up bottleneck module with trainable parameters, whereas the original MLP and Self-Attention modules remain frozen. The AdaptFormer consists of the original frozen branch coupled with AdaptMLP. In contrast, our Multi-Adapter incorporates various trainable modules alongside the frozen branch for enhanced adaptability. And $B \times$ is represented by $B$ Blocks.}
\label{fig:ouradapter}
\end{figure*}

In SimE, by simply combining CLIP and AdaptFormer, we observe that increasing the number of adapters between transformer blocks can improve model performance. To explore better methods of adapter connections, we propose a new adapter design named Multi-Adapter (see Fig.\ref{fig:ouradapter}), which aims to increase the number of adaptive connections beyond the constraints imposed by the AdaptFormer architecture. Surprisingly, we find that within transformer blocks, increasing the number of adaptive connections in smaller incremental steps does not enhance, and may even degrade the model's IL capabilities. This improvement only occurs in larger incremental stages. Extensive experiments across various settings demonstrate the effectiveness of SimE on IL tasks. Our contributions can be summarized as follows:

\begin{itemize} 
\item We introduce SimE, which surpasses existing baseline IL models in class-incremental learning tasks. SimE is distinguished by its efficiency in three key areas: GPU usage, the number of trainable parameters, and memory size. Furthermore, SimE achieves competitive or superior accuracy with fewer additional parameters compared to other methods leveraging pre-trained models.
\item We propose Multi-Adapter to explore better methods of adapter connections and observe a significant phenomenon: there is a nonlinear correlation between the number of adaptive connections and the model's IL capabilities. While increasing the number of adapter connections between transformer blocks positively impacts model performance, within transformer blocks, adding more adaptive connections in smaller incremental steps does not enhance, and may even degrade the model's IL ability. Such improvement only occurs at more advanced incremental stages. 
\item We conduct a systematic study to enhance the utilization of the zero-shot capabilities of CLIP under SimE, pinpointing the most suitable backbone for CIFAR-100 and TinyImageNet. We advocate for the use of CLIP models that have been pre-trained on expansive datasets, such as LAION-2B, and possess larger architectures like ViT-L/14. 
\end{itemize}

\section{Related work}
\label{sec:related work}
\textbf{Conventional Continual Learning Methods:} Traditional continual learning methods can be divided into three categories: regularization-based, architecture-based, and replay-based approaches. Regularization-based methods \cite{li2017learning, aljundi2018memory, kirkpatrick2017overcoming} mitigate forgetting by incorporating regularization terms into the loss function, encouraging the model to retain weights important for previous tasks. However, these methods may diminish the model's ability to learn new categories effectively. Architecture-based methods \cite{serra2018overcoming, mallya2018packnet, wang2020learn} adjust the network's structure to accommodate new tasks by expanding it or altering its configuration. While effective, these methods may not be ideal for task-agnostic continual learning and can lead to increased memory usage. Replay-based methods \cite{rebuffi2017icarl, buzzega2020dark, cha2021co2l} involve storing and periodically revisiting data from previous tasks to help the model retain prior knowledge. Although useful, these methods can raise privacy concerns and may be less effective with smaller data buffers. Moreover, traditional continual learning models are typically trained from scratch, which may limit the maximum achievable performance by not leveraging pre-trained models.

\textbf{Continual Learning Methods Using CLIP:} Recently, pre-trained models have been increasingly adopted in continual learning due to their powerful feature extraction capabilities \cite{wang2022learning, wang2022dualprompt}. CLIP \cite{radford2021learning}, renowned for its impressive zero-shot abilities, excels in feature extraction through contrastive learning on vast amounts of image-text pairs. Consequently, several studies \cite{thengane2022clip, ding2022don, zheng2023preventing, zhou2022learning, wang2023attriclip, yu2024boosting} have integrated CLIP into continual learning models to enhance performance. Continual-CLIP \cite{thengane2022clip} directly applies CLIP to continual learning without any finetuning, maintaining CLIP's feature extraction capacity but potentially suffering from domain gaps between pre-trained datasets and downstream tasks. LwF-VR \cite{ding2022don} and ZSCL \cite{zheng2023preventing} finetune the entire model using traditional continual learning methods to adapt to specific tasks. This process is computationally expensive due to the large size of pre-trained models and may also lead to the forgetting of previously learned knowledge. Thus, the finetuning strategy significantly impacts model performance.

\textbf{Continual Learning Methods Using Adapter Finetuning:} Adapters were initially introduced in natural language processing \cite{houlsby2019parameter} to finetune pre-trained models for specific tasks by modifying a minimal set of weights. This approach has gained traction across various fields due to its notable efficiency \cite{chen2022adaptformer, dong2024efficient}. In the realm of continual learning, recent studies\cite{yu2024boosting, liu2023class, ermis2022memory, ermis2022continual} have explored integrating adapters, placing them after the encoder or within the model's blocks. These adapters enable learning new tasks with a limited number of trainable parameters while preserving the core feature extraction functions. AdaptFormer \cite{chen2022adaptformer}, known for its lightweight parameterization, enhances model efficiency but is limited by the number of adaptive connections it can establish. In this paper, we introduce a Multi-Adapter that expands the number of adaptive connections, thereby extending the model's flexibility.

\section{The SimE framework via vision-language models with adapters}
\label{sec:meth}
In this section, we first introduce the definition of the incremental learning (IL) task. Next, in Section \ref{sec:meth-SimE}, we present SimE, a framework that combines the image encoder in vision-language models with an adapter. Then, we introduce the formulation of the Multi-Adapter in Section \ref{sec:meth-SimE-multiadapter}. Finally, in Section \ref{sec:theorem introduction}, we introduce a theorem and a observation about adapter connection.

Consider a sequence of tasks $\mathcal{D} = {D_1, D_2, \ldots, D_T}$, where the $t$-th task is defined as $D_t = { (\mathbf{x}_i^t, \mathbf{y}_i^t)}_{i=1}^{m_t}$. Here, $D_t$ contains $m_t$ samples $\mathbf{x}_i^t$ and their corresponding labels $\mathbf{y}_i^t$. During the learning of task $D_t$, we have access only to the data from $D_t$; the data from previous tasks ${D_1, D_2, \ldots, D_{t-1}}$ are unavailable.

\subsection{SimE formulation}
\label{sec:meth-SimE}
SimE is structured into three primary phases: data pre-processing, feature extraction, and image classification. In the initial phase, raw image data are transformed into a format compatible with the model's requirements. This is followed by the feature extraction phase, where an encoder—specifically the image encoder from a pre-trained vision-language model equipped with an adapter and prototype extractors—processes the formatted images. The process culminates in the image classification phase, where a fully connected (FC) layer acts as the classifier. This classifier is intricately designed to support class-incremental learning (CIL), facilitating the seamless incorporation of new classes.

\textbf{The Encoder.}
Image encoders are utilized to extract visual features from preprocessed images. Commonly used image encoder architectures include ResNet and ViT. Taking ViT as an example, the $i$ th block of basic transformer module can be described as follows: 1) self-attention $f_i: \mathcal{X}_i \rightarrow \mathcal{A}_i$, which computes the attention among elements and learns global information through their interactions; 2) MLP $g_i: \mathcal{A}_i\rightarrow \mathcal{H}_i$, which applies nonlinear transformations to the input sequence to enhance the model's expressive capacity. Formally, this can be represented as:
\begin{equation}
\begin{aligned}
i\text{ th \ Self-Attention:} \ \bm{a}_i = f_i(\theta_i, \bm{x}_i),\\
i\text{ th \ MLP:} \ \bm{h}_i = g_i(\phi_i, \bm{a}_i).
\end{aligned}
\label{eq:frozenencoder}
\end{equation}

Here, $\bm{{h}}_i$ contains the visual features of the original image $\bm{x}_i$, Self-Attention and MLP are instantiated with the parameters $\theta_i$ and $\phi_i$ respectively. For the pre-trained encoder, both $\theta_i$ and $\phi_i$ are pre-trained weights that are frozen.

\textbf{The Adapter.} An adapter is a lightweight module designed to finetune pre-trained models for downstream datasets with a minimal number of additional parameters. The parameters of adapters are trainable and will be updated during the finetuning process, while the weights of pre-trained models are frozen. The adapter in the $i$ th blocks  extracts features as $d_i: \mathcal{X}_i \rightarrow {\mathcal{H}_i}$,
\begin{equation}
i\text{ th \ Adapter:} \ {\bm{p}_i} = d_i(\Tilde{\eta_i},\bm{x}_i).
\end{equation}
Here, adapter $i$ is instantiated with parameters $\Tilde{\eta_i}$, where $\Tilde{\eta_i}$ are trainable. The visual features extracted by the adapter are integrated into the pre-trained encoder, enhancing its ability to extract visual features of downstream datasets. It is noteworthy that, unlike the modules of the pre-trained encoder, both the number and positions of adapters are variable. By introducing adapter into pre-trained encoder, we get the general form of the encoder with adapter $E(\bm x)$:
% \begin{equation}
% E(\bm{x})= \sum_{i=1}^{B} (g_i(\phi_i, f_i(\theta_i,{\bm x}_i))+ d_i(\Tilde{\eta_i},f_i(\theta_i,{\bm x}_i)))
% % d_i(\Tilde{\eta_i},\bm{x}_i)),
% \label{adapter-overallV1}
% \end{equation}

\begin{equation}
E(\boldsymbol{x}) = \prod_{i=1}^B (g_i \left( \phi_i, f_i \left( \theta_i,  \bm{x}_i \right) \right)+ d_i(\Tilde{\eta_i},f_i(\theta_i,{\bm x}_i)))
\label{adapter-overallV1}
\end{equation}

where $B$ is the number of the block in the encoder. Especially, when $i=0$, $\bm{x}_i$ is the reprocessed image $\bm{x}$. More implementation of pre-trained model with adapters is described at Appendix \ref{appendixsec:meth-SimE-toymodel}.

\subsection{Multi-Adapter formulation}
\label{sec:meth-SimE-multiadapter}
We propose the Multi-Adapter, which comprises three adapter sub-modules: AdaptAtten, AdaptMLP, and AdaptAll as shown in Fig.\ref{fig:ouradapter}. The sub-modules AdaptAtten, AdaptMLP, and AdaptAll share the same structure, each containing a down-projection, a non-linear activation function (e.g., ReLU) and an up-projection. Thus, we can derive a more specific form of the $i$th adapter $r_{i}: \mathcal{C}_i \rightarrow \hat{\mathcal{S}}$,
\begin{equation}
{\bm{s}_{ij}} = r_{ij}(\Tilde{\eta_{ij}},\bm{c}_{ij})
\end{equation}
where
\begin{equation}
\bm{c}_{ij} = 
\begin{cases}
\bm{a}_{i}, & j=1, \ \bm{s}_{ij} = \bm{h}_{i}, \ \bm{r}_{ij} \ \text{is AdaptMLP}, \\
\bm{x}_{i}, & j=2, \ \bm{s}_{ij} = \bm{a}_{i}, \ \bm{r}_{ij} \ \text{is AdaptAtten}, \\
\bm{x}_{i}, & j=3, \ \bm{s}_{ij} = \bm{h}_{i}, \ \bm{r}_{ij} \ \text{is AdaptAll}.
\end{cases}
\end{equation}

here $\bm{c}_{ij}$ can be equal to the initial input $\bm{x}_{i}$ or an intermediate variable $\bm{a}_{i}$  in (\ref{eq:frozenencoder}), and $j \in \{1,2,3\}$, correspond to AdaptMLP, AdaptAtten, and AdaptAll, respectively.
Thus, the $i$ th block in ViT can be represented as a combination of the pre-trained modules and the adapter sub-modules:
% \begin{equation}
% E'(\bm{c})=\sum_{i}^{B} \sum_{j}^{Z}(g_{ij}(\theta_{ij}, f_{ij}(\phi_{ij},\bm{c}_{ij}))+ r_{ij}(\Tilde{\eta_{ij}},\bm{c}_{ij}))
% \label{adapter-overallV2}
% \end{equation}

\begin{equation}
E'(\bm{c})=\prod_{i=1}^B( \sum_{j}^{Z}(g_{ij}(\theta_{ij}, f_{ij}(\phi_{ij},\bm{c}_{ij}))+ r_{ij}(\Tilde{\eta_{ij}},\bm{c}_{ij})))
\label{adapter-overallV2}
\end{equation}

where $B$ is the number of the block in the encoder and $Z$ is a subset of $\{1,2,3 \}$ ($Z \subseteq \{1,2,3 \}$). Especially, when $i=0$, the $\bm{c}_{0j}$ is the reprocessed image $\bm{x}$. 
By identifying the trainable parameters, the adapter can be instantiated. Lastly, the optimisation of adapter for domain adaptation can see in Appendix \ref{appendixsec:optimisation-of-Adapter}.

\begin{table*}[htb]\scriptsize
\centering
\caption{Comparison on the accuracy of different CIL methods. The Average and Last accuracy of different CIL methods on CIFAR100 and TinyImageNet benchmark. The {\dag} indicates the result based on the CLIP ViT-L/14 pre-trained on Laion-2B. The 100 classes of TinyImageNet are used as base classes. The best results are coloured grey.}
\resizebox{1.0\textwidth}{!}{
\scriptsize
\begin{tabular}{l|cccccc|cccccc}
        \toprule
        & \multicolumn{6}{c|}{CIFAR100} & \multicolumn{6}{c}{TinyImageNet} \\
        & \multicolumn{2}{c}{10 Steps} & \multicolumn{2}{c}{20 Steps} & \multicolumn{2}{c|}{50 Steps}  & \multicolumn{2}{c}{5 Steps} & \multicolumn{2}{c}{10 Steps} & \multicolumn{2}{c}{20 Steps} \\
        Methods & Avg & Last & Avg & Last & Avg & Last & Avg & Last & Avg & Last & Avg & Last \\
        \midrule
        UCIR\cite{hou2019learning}     & 58.66 & 43.39 & 58.17 & 40.63 & 56.86 & 37.09 & 50.30 & 39.42 & 48.58 & 37.29 & 42.84 & 30.85 \\
        PASS\cite{zhu2021prototype}    & -	  & -	  &  -	  &   -	  &   -   &   -	  & 49.54 & 41.64 & 47.19 & 39.27 & 42.01 & 32.93 \\
        DyTox\cite{douillard2022dytox} & 67.33 & 51.68 & 67.30 & 48.45 & 64.39 & 43.47 & 55.58 & 47.23 & 52.26 & 42.79 & 46.18 & 36.21 \\
        DER\cite{yan2021dynamically}   & 74.64 & 64.35 & 73.98 & 62.55 & 72.05 & 59.76 &   -	  &   -   &   -	  &   -	  &   -	  &   -   \\
        \midrule
        CLIP\cite{radford2021learning}& 74.47	& 65.92	& 75.20	& 65.74	& 75.67	& 65.94	& 69.62	& 65.30	& 69.55	& 65.59	& 69.49	& 65.30 \\
        Fien-tune    	                        & 65.46	& 53.23	& 59.69	& 43.13	& 39.23	& 18.89	& 61.54	& 46.66	& 57.05	& 41.54	& 54.62	& 44.55 \\
        iCaRL\cite{rebuffi2017icarl}	        & 79.35	& 70.97	& 73.32	& 64.55	& 71.28	& 59.07	& 77.02	& 70.39	& 73.48	& 65.97	& 69.65	& 64.68 \\
        LwF\cite{li2017learning}	            & 65.86	& 48.04	& 60.64	& 40.56	& 47.69	& 32.90	& 60.97	& 48.77	& 57.60	& 44.00	& 54.79	& 42.26 \\
        Continual-CLIP\cite{thengane2022clip}   & 75.17 & 66.72 & 75.95 & 66.72 & 76.49 & 66.72 & 70.49 & 66.43 & 70.55 & 66.43 & 70.51 & 66.43 \\
        LwF-VR\cite{ding2022don}                & 78.81 & 70.75 & 74.54 & 63.54 & 71.02 & 59.45 & 77.56 & 70.89 & 74.12 & 67.05 & 69.94 & 63.89 \\
        ZSCL\cite{zheng2023preventing}    	    & 82.15	& 73.65	& 80.39	& 69.58	& 79.92	& 67.36	&80.27	& 73.57	& 78.61 & 71.62	& 77.18	& 68.30 \\
        SimE(Ours)	                            & 85.94	&77.10	&85.67	&76.61	&84.16	&73.88	& 79.35	&75.37	&79.32	&75.37	&79.29	&75.37 \\
        SimE(Ours)\dag	                        & \cellcolor{lightgray}91.66	& \cellcolor{lightgray}86.03	&\cellcolor{lightgray} 92.27	&\cellcolor{lightgray} 86.64	&\cellcolor{lightgray} 91.64	&\cellcolor{lightgray} 85.35	&\cellcolor{lightgray} 86.47	&\cellcolor{lightgray} 83.33 &\cellcolor{lightgray} 86.41	&\cellcolor{lightgray} 83.33	&\cellcolor{lightgray} 86.39	&\cellcolor{lightgray} 83.33 \\
        \bottomrule
\end{tabular}}
\label{table:main resluts}  
\end{table*}

\begin{table*}[htb]\scriptsize
\caption{Comparison between SimE and famous parameter-efficient CIL methods that utilize pre-trained models on multiple datasets. All datasets are split into 10 steps, except ObjectNet, which is split into 20 steps. All experiments are conducted based on CLIP ViT-B/16. The best results are coloured grey.}
\centering
\normalsize %\tiny \scriptsize \footnotesize \small \normalsize \large \Large
\renewcommand{\arraystretch}{1.2}
\resizebox{1.0\textwidth}{!}{
\begin{tabular}{l|cccccccccccccccc}
    \toprule
     &\multicolumn{2}{c}{CIFAR100} &\multicolumn{2}{c}{CUB200} &\multicolumn{2}{c}{ImaneNet-R} &\multicolumn{2}{c}{ImaneNet-100} &\multicolumn{2}{c}{VTAB} &\multicolumn{2}{c}{ImaneNet-A} &\multicolumn{2}{c}{ObjectNet} &\multicolumn{2}{c}{OmniBenchmark} \\
    Methods & Avg  & Last & Avg  & Last & Avg  & Last & Avg  & Last & Avg  & Last & Avg  & Last & Avg  & Last& Avg  & Last \\ 
    \midrule
     L2P\cite{wang2020learn} & 81.90 & 73.08 & 71.90 & 62.99 & 81.67 & 75.98 & 80.51 & 67.22 & 77.11 & 77.10 & 49.39 & 41.71 & 63.78 & 52.19 & 73.36 & 64.69\\
     DualPrompt\cite{wang2022dualprompt}   & 81.45 & 72.51 & 71.74 & 62.14 & 82.01 & 75.77 & 80.65 & 67.38 & 83.36 & 81.23 & 53.71 & 41.67 & 59.27 & 49.33 & 73.92 & 65.52\\
     CODA-Prompt\cite{smith2023coda}  & 76.98 & 62.25 & 66.61 & 50.88 & 78.00 & 67.52 & 64.13 & 34.76 & 83.90 & 83.02 & 53.54 & 42.73 & 66.07 & 53.29 & 77.03 & 68.09\\
     %SLCA\cite{zhang2023slca}  & 80.53 & 67.58 & 73.30 & 60.39 & 75.92 & 70.37 & 78.63 & 59.92 \\
     Boosting-CL\cite{yu2024boosting} & 85.21 & 77.52 & 69.56 & 55.09 & 83.48 & 74.81 & 88.05 & 80.90 & 76.69 & 66.25 & 61.00 & 49.04 & \cellcolor{lightgray}{68.12} & 55.30 & 82.13 & 73.88 \\
     APER\cite{zhou2025revisiting} & 75.76 & 65.50 & 78.80 & 70.61 & 78.62 & 71.35 & 85.84 & 76.40 & 85.95 & \cellcolor{lightgray}{84.35} & 60.47 & 49.37 & 67.18 & 55.24 & 80.75 & \cellcolor{lightgray}{74.37} \\
     MISA\cite{kang2025advancing} & 81.20 & \cellcolor{lightgray}{80.63} & 66.39 & 56.62 & 82.05 & 73.48 & 83.93 & 80.08 & 81.62 & 69.88 & 55.32 & 47.91 & 64.71 & 54.25 & 81.97 & 73.71 \\
     SimE(Ours)   & \cellcolor{lightgray}{85.94} & 77.10 & \cellcolor{lightgray}{84.98} & \cellcolor{lightgray}{76.68} & \cellcolor{lightgray}{83.19} & \cellcolor{lightgray}{75.82} & \cellcolor{lightgray}{89.77} & \cellcolor{lightgray}{80.94} & \cellcolor{lightgray}{86.69} & 78.04 & \cellcolor{lightgray}{63.19} & \cellcolor{lightgray}{50.56} & 67.51 & \cellcolor{lightgray}{55.74} & \cellcolor{lightgray}{83.29} & 74.19 \\
    \bottomrule
\end{tabular}}
\label{tab:sota}
\end{table*}

\subsection{Parameter spaces and function properties }
\label{sec:theorem introduction}
We next provide a theoretical analysis showing why inserting more Adapter blocks (or changing them) 
can increase the capacity of the model in a supremum sense, 
yet might not guarantee better actual performance under different insert settings.

Let $\Theta_0$ be the parameter space of the pre-trained encoder without adapters, then the model is $E_0(x)\in \Theta_0$.  Let $\Theta_{N,loc}$ be the parameter space when we insert $N$ adapter blocks at certain positions $loc$.
A performance measure is
\begin{equation}
f \bigl(x,\,\theta,\, N, \, loc\bigr)
\end{equation}
%\[f \bigl(x;\,\theta,\, N, \, \mathrm{loc}\bigr),\]
where $\theta\in \Theta_{N,loc}$ includes both original pre-trained parameters (often frozen) and newly introduced parameters of $N$ adapters.  We write
\begin{equation}
    \sup_{\theta\in \Theta_{N,\mathrm{loc}}} f(\theta)
\end{equation}
%\[\sup_{\theta\in \Theta_{N,\mathrm{loc}}} f(\theta)\]
for the maximum achievable performance in that space, and $f(\theta^*(\Theta_{N,\mathrm{loc}}))$ for the actual solution found by a given adapter location $loc$. Then we propose a theorem and put proof session at Appendix \ref{appendixsec:theorem proof}:

\begin{theorem}[Monotonic Bound on Performance]
\label{thm:1}

If
\[
  \Theta_0 \;\subseteq\; \Theta_{N,\mathrm{loc}} 
  \;\subseteq\; \Theta_{M,\mathrm{loc}},
  \quad M > N \ge 0,
\]
\text{then}
\begin{equation}
  \begin{aligned}
    \sup_{\theta\in \Theta_{M,\mathrm{loc}}} 
      f(x,\theta,M,\mathrm{loc})
    &\ge
    \sup_{\theta\in \Theta_{N,\mathrm{loc}}}
      f(x,\theta,N,\mathrm{loc}) \\
    &\ge
    \sup_{\theta\in \Theta_0}
      f(x,\theta,0,\varnothing).
  \end{aligned}
\end{equation}
\end{theorem}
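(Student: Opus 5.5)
The plan is to reduce the statement to the elementary fact that a supremum over a larger set is at least the supremum over a subset. The only genuine issue is that the three suprema use $f$ with different arguments $(N,\mathrm{loc})$, $(M,\mathrm{loc})$, $(0,\varnothing)$. So first I will make precise in what sense the inclusions $\Theta_0 \subseteq \Theta_{N,\mathrm{loc}} \subseteq \Theta_{M,\mathrm{loc}}$ hold.

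The natural reading, which I will adopt, is an embedding in which the extra adapters are \emph{switched off}. Each adapter sub-module $r_{ij}$ (down-projection, nonlinearity, up-projection) contributes additively to the block output in \eqref{adapter-overallV1} and \eqref{adapter-overallV2}. Setting its up-projection weights (or a scaling factor) to zero therefore makes $r_{ij}\equiv 0$, and the block reduces to the frozen $g_i(\phi_i,f_i(\theta_i,\bm{x}_i))$. Concretely, I define the inclusion map $\iota:\Theta_{N,\mathrm{loc}}\to\Theta_{M,\mathrm{loc}}$ by keeping the pre-trained weights and the $N$ existing adapter parameters, and assigning zero up-projection to the additional $M-N$ adapters. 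The same construction with $N=0$ embeds $\Theta_0$.

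The key step is consistency of the performance measure under this embedding:
\[
f\bigl(x,\iota(\theta),M,\mathrm{loc}\bigr)=f\bigl(x,\theta,N,\mathrm{loc}\bigr)\quad\text{for all }\theta\in\Theta_{N,\mathrm{loc}},
\]
and likewise for $\Theta_0$ with $f(x,\theta,0,\varnothing)$. I would verify this block by block, by induction over $i=1,\dots,B$. Since the zeroed adapters add nothing, each block's output, and hence the encoder output $E(\bm{x})$ or $E'(\bm{c})$, is identical for $\iota(\theta)$ and $\theta$. Because $f$ depends on the parameters only through the induced function (prototypes and classifier are computed from the encoder features), the performance values coincide. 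This identification is what the hypothesis "$\subseteq$" is meant to encode, so I will state it explicitly as the interpretation of the inclusion.

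Once that is in place, the inequality is immediate. For any $\theta\in\Theta_{N,\mathrm{loc}}$ we have $f(x,\theta,N,\mathrm{loc})=f(x,\iota(\theta),M,\mathrm{loc})\le\sup_{\theta'\in\Theta_{M,\mathrm{loc}}}f(x,\theta',M,\mathrm{loc})$. Taking the supremum over $\theta$ gives the first inequality, and the same argument with $\Theta_0\hookrightarrow\Theta_{N,\mathrm{loc}}$ gives the second. The suprema may be $+\infty$; the inequalities still hold in the extended reals.

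The main obstacle is not the order-theoretic step but justifying the functional-equivalence identity. It is needed so that the inclusion is meaningful across different architectures, and the proof is only as strong as this modeling assumption. I would also remark that the theorem concerns only suprema. It says nothing about $f(\theta^*(\Theta_{M,\mathrm{loc}}))$ versus $f(\theta^*(\Theta_{N,\mathrm{loc}}))$, since the optimizer need not attain the supremum; this is consistent with the nonmonotone behavior observed empirically.
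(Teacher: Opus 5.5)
Your proposal is correct and follows the same route as the paper: embed the smaller parameter space into the larger one by switching the extra adapters ``off'' so that the performance value is unchanged, and then use the fact that a supremum over a superset dominates the supremum over a subset. Your version is slightly more careful than the paper's. The paper invokes the off-switch only for $\Theta_0 \subseteq \Theta_{N,\mathrm{loc}}$ and treats $\Theta_{N,\mathrm{loc}} \subseteq \Theta_{M,\mathrm{loc}}$ as a bare set inclusion, even though $f$ carries different arguments $(N,\mathrm{loc})$ and $(M,\mathrm{loc})$; you state the functional-equivalence identity explicitly for both inequalities.
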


Theorem.\ref{thm:1} tells us that, as we increase adapter number from $0$ to $N$ and further to $M$, we expand the parameter space and thus cannot lower the optimal performance. In practice, we expect the improvement of number of adapters could leverage this extra capacity, but no guarantee is given 
on whether the final solution $\theta^*$ is indeed close to the supremum.

\textit{Empirical Observation 1}

While Theorem.\ref{thm:1} shows that adding more adapters enlarges the parameter space and thus cannot lower the optimal performance in theory, we observed in our experiments that the actual performance does not always improves and sometimes even degrades when additional adapter connections are introduced within the same transformer block. For example, as shown in Table.\ref{table:adapter_structure}, on CIFAR-100 with 10 incremental steps, using both Adapt-MLP and Adapt-Atten (2.38M params) yields an average accuracy of 85.73\%, which is slightly lower than using Adapt-Atten alone (85.94\%, 1.19M params). Similar trends appear across 20-step settings. These indicate that expanding the hypothesis space does not guarantee that the optimization procedure finds a better solution, especially when the additional modules added on small incremental steps, and we refer to this phenomenon as the nonlinear relationship between adapter connections and model performance.

% \begin{theorem}[Non-monotonic Actual Solutions]
% \label{thm:2}
% Let $loc$ be a specific insertion strategy (AdaptMLP, AdapterAtten,AdapterAll).
% Then for $M>N$, it is possible that
% \begin{equation}
% \sup_{\theta\in \Theta_{M,\mathrm{loc}}} f(\theta)>\sup_{\theta\in \Theta_{N,\mathrm{loc}}} f(\theta)
% \end{equation}
% yet
% \begin{equation}
%   f\bigl(\theta_{M,\mathrm{loc}_1}^*\bigr)
%   \;<\;
%   f\bigl(\theta_{N,\mathrm{loc}_2}^*\bigr),
% \end{equation}
% showing that simply increasing adapter connections (e.g., MLP+Atten in the same block) does not necessarily beat a smaller or different-located Adapter design.
% \end{theorem}

% Theorem.\ref{thm:2} highlights a fundamental gap 
% between the theoretical capacity (Theorem.\ref{thm:1}) and 
% the actual performance of he model. In particular, adding more Adapters in a single block (transition from Atten to MLP+Atten, etc.) expands the space 
% yet can fail to yield better performance.

\begin{figure*}[htp] 
\centering
\begin{minipage}[b]{0.3\textwidth}
  \centering
  \includegraphics[width=\textwidth]{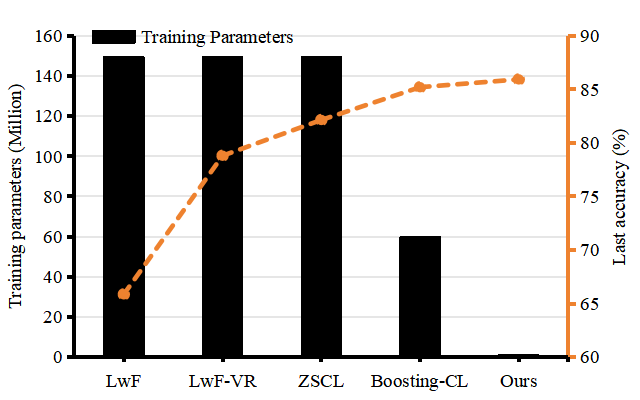}
  \text{(a)}
\end{minipage}
\begin{minipage}[b]{0.3\textwidth}
  \centering
  \includegraphics[width=\textwidth]{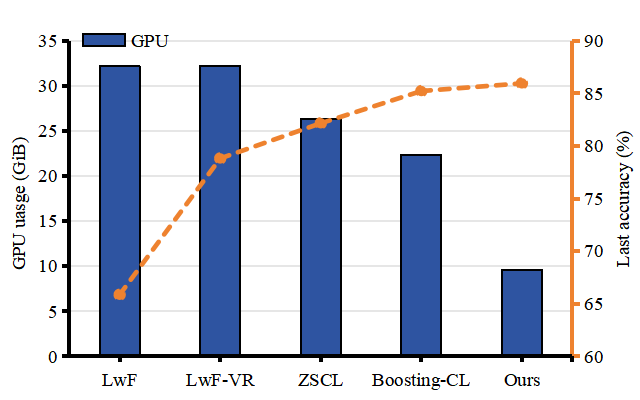}
  \text{(b)}
\end{minipage}
\begin{minipage}[b]{0.3\textwidth}
  \centering
  \includegraphics[width=\textwidth]{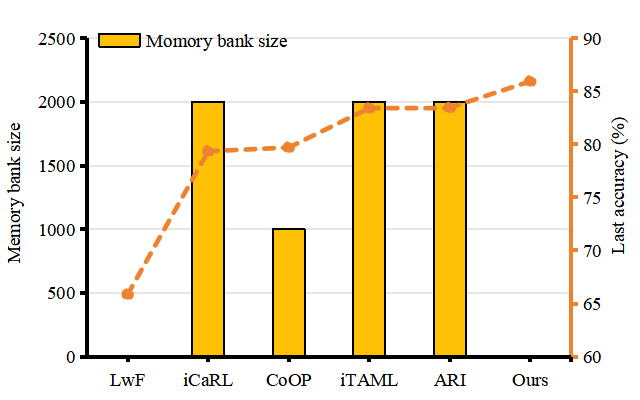}
  \text{(c)}
\end{minipage}

\vspace{-0.1em}

\begin{minipage}[b]{0.3\textwidth}
  \centering
  \includegraphics[width=\textwidth]{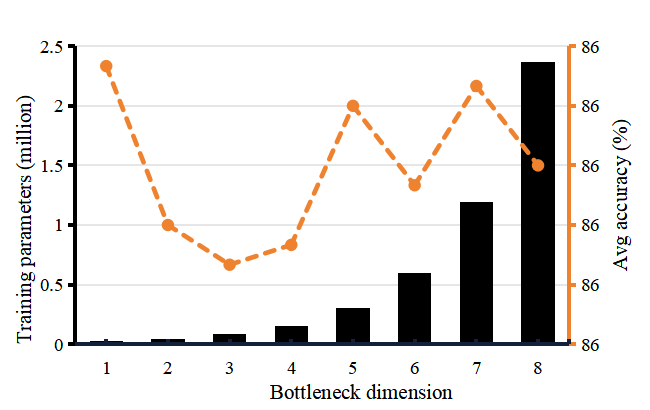}
  \text{(d)}
\end{minipage}
\begin{minipage}[b]{0.3\textwidth}
  \centering
  \includegraphics[width=\textwidth]{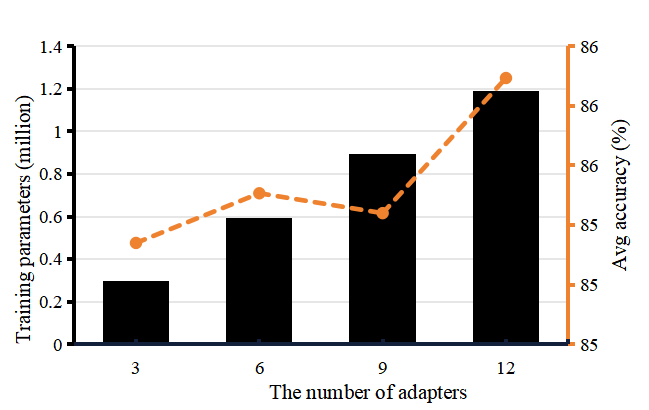}
  \text{(e)}
\end{minipage}
\begin{minipage}[b]{0.3\textwidth}
  \centering
  \includegraphics[width=\textwidth]{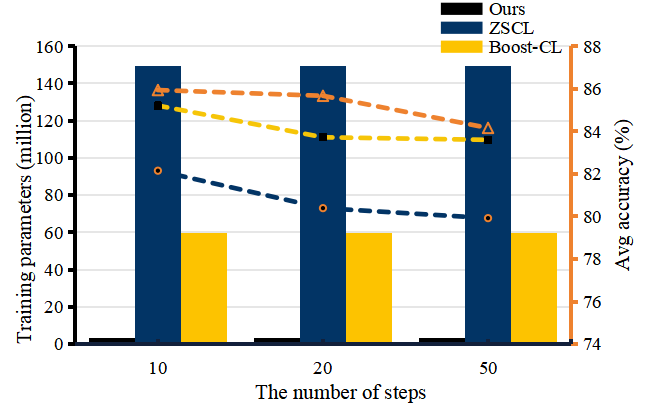}
  \text{(f)}
\end{minipage}
\vspace{-0.4em}
\caption{Comparison on the efficiency of different CIL methods. The dotted line and right axis coloured orange present the Last accuracy or Avg accuracy. (a)(b)(c) denote the Training parameters, GPU usage, Memory bank size and Last accuracy of different CIL methods respectively, (d)(e) is the Training parameters and Avg accuracy of Ours under different bottleneck dimensions and number of adapters. (f) show the comparison between Ours and other CIL methods in training parameters and Avg accuracy. All the experiments are conducted on CIFAR100, and (a)-(e) are conducted in 10steps.}
\label{fig:compeffenc}
\end{figure*}

\begin{figure*}[htp]
\centering
\begin{minipage}[b]{0.45\textwidth}
  \centering
  \includegraphics[width=\textwidth]{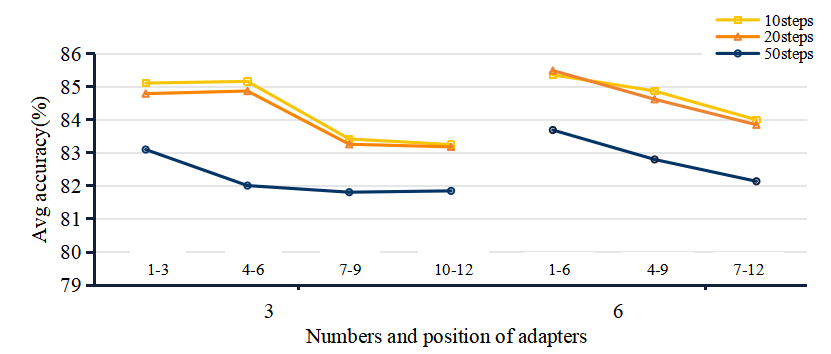}
  \text{(a)}
\end{minipage}
\begin{minipage}[b]{0.45\textwidth}
  \centering
  \includegraphics[width=\textwidth]{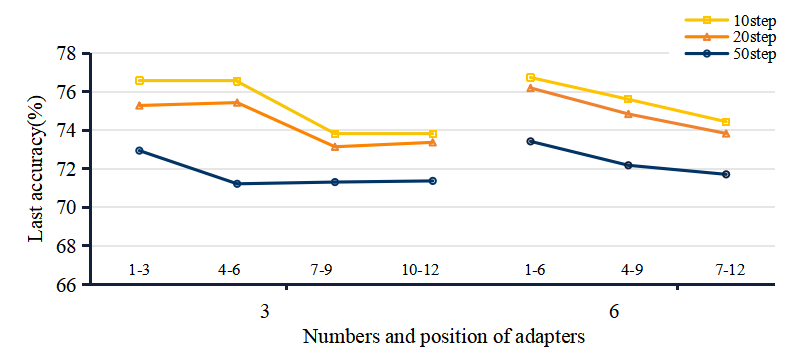}
  \text{(b)}
\end{minipage}
\begin{minipage}[b]{0.45\textwidth}
  \centering
  \includegraphics[width=\textwidth]{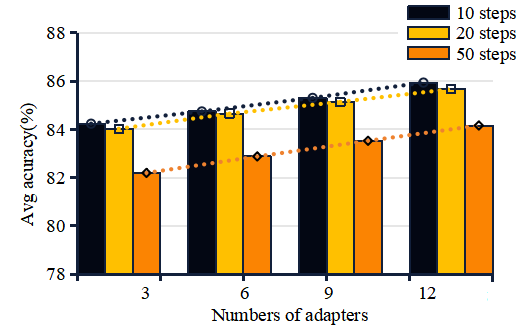}
  \text{(c)}
\end{minipage}
\begin{minipage}[b]{0.45\textwidth}
  \centering
  \includegraphics[width=\textwidth]{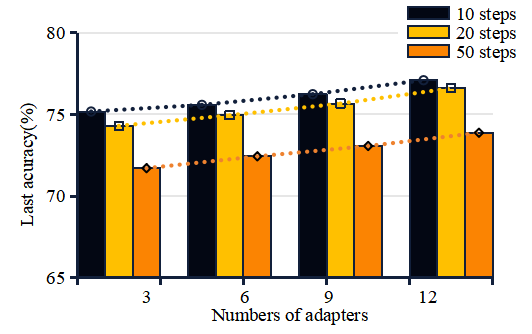}
  \text{(d)}
\end{minipage}
\caption{Influence of adapters' position and number between transformer blocks. The x-axis represents the number of adapters in the encoder, with the numerical ranges indicating the positions of the adapters. For example, "1-3" signifies that adapters are inserted in the first 3 blocks. The accuracy shown in (c) and (d) represents the average results for different adapter positions with the same number of adapters. All results are based on CIFAR100.}
\label{fig:numadapterwACC}
\end{figure*}

\section{Experimental results}
In this section, we begin by comparing the performance of the proposed SimE method with that of other Class-Incremental Learning (CIL) methods. Next, we evaluate the efficiency of these models by examining their number of trainable parameters, GPU usage, and memory bank size. Furthermore, we conduct ablation experiments to investigate the impact of various components within SimE. The details of the experimental settings are provided in Appendix \ref{appendixsec:exp-expsetting}.

\subsection{Comparison on the accuracy of different CIL methods}
\label{sec:exp-results}
First, we compare the performance of our method with other CIL methods across all tasks and present the results in Table.\ref{table:main resluts}. The results, show that our method achieves the highest scores among recent state-of-the-art methods, demonstrating the significant effectiveness of incorporating adapters into a pre-trained model. Specifically, our method and other CLIP-based methods (CoOP\cite{zhou2022learning}, Continual-CLIP\cite{thengane2022clip}) have a substantial advantage over traditional continual learning methods initially, reflecting the superior feature extraction capabilities of pre-trained models.

However, as shown in Appendix \ref{appendixsec:addition-acc-eff}, the accuracy of other CLIP-based methods drops quickly as training progresses, indicating that they are severely affected by domain gaps or catastrophic forgetting. Our method not only outperforms CLIP-based methods at the start, showing that finetuning helps the model adapt to downstream tasks, but also exhibits a slower decline in performance because it retains the original feature extractor, thus preserving the pre-trained model’s prior knowledge. 

We also compare our method with famous parameter-efficient CIL methods, like L2P\cite{wang2022learning} and DualPrompt\cite{wang2022dualprompt} on challenging datasets like ImageNet-R, and the results are presented in Table.\ref{tab:sota}. Our method remains superior in different datasets, further demonstrating its effectiveness.

Additionally, in Appendix \ref{appendixsec:addition-acc-eff}, we compare the performance of each task on CIFAR100 and TinyImageNet, revealing differences in the generalization ability of the backbone across different datasets.

\begin{table*}[htp]
\caption{The results of different implementations of Multi-Adapter. The structures of Adapter-MLP, Adapter-Atten and Adapt-All are shown in Fig.\ref{fig:ouradapter}. "Para" refers to trainable parameters, with "M" standing for million. All experiments are conducted on CIFAR100 and the best results are coloured grey}
\label{table:adapter_structure}
\centering
\tiny %\tiny \scriptsize \footnotesize \small \normalsize \large \Large
\renewcommand{\arraystretch}{0.8}
\resizebox{1.0\textwidth}{!}{
\begin{tabular}{ccc|c|cccccc}%{@{}p{2cm}p{1cm}p{1cm}p{1cm}p{1cm}p{1cm}p{1cm}@{}}
    \toprule
    & & & &\multicolumn{2}{c}{10 steps} &\multicolumn{2}{c}{20 steps} &\multicolumn{2}{c}{50 steps} \\ 
    \bf Adapt-MLP  & \bf Adapt-Atten  & \bf Adapt-All & \bf Para(M) & \bf Avg  & \bf Last & \bf Avg & \bf Last & \bf Avg & \bf Last \\ 
    \midrule
    \textcolor{lightgray}{\XSolidBrush} & \textcolor{lightgray}{\XSolidBrush} & \textcolor{lightgray}{\XSolidBrush} & 0 & 79.69 & 70.08 & 80.41 & 70.08 & 80.80 & 70.08 \\
    \CheckmarkBold & \textcolor{lightgray}{\XSolidBrush} & \textcolor{lightgray}{\XSolidBrush} & 1.19 & 85.60 & 76.70 & 85.30 & 76.02 & 84.09 & 73.77\\
    \textcolor{lightgray}{\XSolidBrush} & \CheckmarkBold & \textcolor{lightgray}{\XSolidBrush} & 1.19 & \cellcolor{lightgray}85.94 &\cellcolor{lightgray} 77.10 &\cellcolor{lightgray}85.67 &\cellcolor{lightgray}76.61 & 84.16 & 73.88\\
    \textcolor{lightgray}{\XSolidBrush} & \textcolor{lightgray}{\XSolidBrush} & \CheckmarkBold               & 1.19  & 85.77 & 76.83 & 85.48 & 76.16 & 84.16 & 73.86\\
     %\midrule
    \CheckmarkBold & \CheckmarkBold & \textcolor{lightgray}{\XSolidBrush} & 2.38 & 85.73 & 76.79 & 85.42 & 76.09 &84.76 &74.66\\
    \textcolor{lightgray}{\XSolidBrush} & \CheckmarkBold & \CheckmarkBold & 2.38 & 85.84 & 76.98 & 85.36 & 76.03 & 84.75 & 74.69\\
    \CheckmarkBold & \textcolor{lightgray}{\XSolidBrush} & \CheckmarkBold & 2.38 & 85.63 & 76.65 & 85.12 & 75.66 & 84.75 & 74.76\\
    %\midrule
    \CheckmarkBold & \CheckmarkBold & \CheckmarkBold & 3.57 & 85.54 & 76.51 & 85.05 & 75.53 & \cellcolor{lightgray}85.00 & \cellcolor{lightgray}75.16\\
    \bottomrule
\end{tabular}}
\end{table*}

\begin{table*}[htb]\scriptsize
\caption{The influence of CLIP pre-trained datasets. WIT-400M is the closed-source dataset of OpenAI while others are from Open\_CLIP. All results are conducted on ViT-B/16 and the 100 classes of TinyImageNet are used as base classes. The best results are coloured grey.}
\label{tab:pre-trained datasets}
\centering
\resizebox{1.0\textwidth}{!}{
\begin{tabular}{l|cccccc|cccccc}
    \toprule
    & \multicolumn{6}{c|}{CIFAR100} & \multicolumn{6}{c}{TinyImageNet} \\
    &\multicolumn{2}{c}{10 steps} &\multicolumn{2}{c}{20 steps} &\multicolumn{2}{c|}{50 steps}  &\multicolumn{2}{c}{10 steps} &\multicolumn{2}{c}{20 steps} &\multicolumn{2}{c}{50 steps}\\ 
    \bf Blocks & \bf Avg  & \bf Last & \bf Avg & \bf Last & \bf Avg & \bf Last  & \bf Avg  & \bf Last & \bf Avg & \bf Last & \bf Avg & \bf Last \\ 
    \midrule
    WIT-400M\cite{radford2021learning}         & 85.60 & 76.70 & 85.30 & 76.02 & 84.09 & 73.77  & 79.35 & 75.37 & 79.32 & 75.37 & 79.29 & 75.37 \\ 
    Laion-400M\cite{schuhmann2022laion}       & 87.14 & 79.54 & 86.86 & 78.82 & 85.95 & 77.63  & 80.62 & 78.01 & 80.46 & 77.48 & 81.06 & 79.22 \\
    Laion-2B\cite{schuhmann2022laion}         &\cellcolor{lightgray}88.34 &\cellcolor{lightgray}81.33 &\cellcolor{lightgray}88.47 &\cellcolor{lightgray}80.89 &\cellcolor{lightgray}87.91 &\cellcolor{lightgray}80.09  &\cellcolor{lightgray}81.98 &\cellcolor{lightgray}79.99 &\cellcolor{lightgray}81.83 &\cellcolor{lightgray}79.77 &\cellcolor{lightgray}82.78 &\cellcolor{lightgray}81.63 \\
    DataComp-1B\cite{gadre2024datacomp}      & 88.04 & 80.77 & 87.89 & 79.88 & 87.57 & 79.25  & 81.38 & 79.11 & 81.49 & 78.70 & 82.51 & 80.89 \\
    CommonPool-1B\cite{gadre2024datacomp}    & 86.96 & 78.74 & 86.58 & 77.88 & 86.21 & 77.10  & 80.13 & 76.95 & 80.26 & 76.69 & 81.24 & 78.70 \\
    \bottomrule
\end{tabular}}
\end{table*}

\subsection{Comparison on the efficiency of different CIL methods }
We compare the efficiency of our proposed SimE method with other Class-Incremental Learning (CIL) methods by examining the number of trainable parameters, GPU usage, and replay data size. The experimental settings are the same as those described in Appendix \ref{appendixsec:exp-expsetting} and the results are shown in Fig.\ref{fig:compeffenc}. As illustrated in Fig.\ref{fig:compeffenc}(a), our method requires only thousands of trainable parameters while achieving competitive results compared to other CIL methods that utilize millions of parameters, significantly reducing training costs. Furthermore, as shown in Fig.\ref{fig:compeffenc}(b) and Fig.\ref{fig:compeffenc}(c), our method uses only one-third of the parameters and does not require a buffer to store replay data. 

We also study the influence of the bottleneck dimension and the number of adapters, as depicted in Fig.\ref{fig:compeffenc}(d) and Fig.\ref{fig:compeffenc}(e). Despite varying these parameters, our method still achieves competitive performance with minimal trainable parameters. These results demonstrate that our method can achieve performance comparable to or even exceeding that of other CIL methods with minimal training costs, thereby strongly validating the efficiency of the proposed SimE method.

\subsection{Ablation study on the influence of adapter components}

\textbf{The influence of adapter connection between transformer blocks.} We investigated the impact of the position and number of adapters inserted between transformer blocks, presenting the results in Fig.\ref{fig:numadapterwACC}, more details are presented in Appendix \ref{appendixsec:add-adapter-abla}. From Fig.\ref{fig:numadapterwACC}, it is evident that inserting the same number of adapters into the earlier blocks significantly improves model performance. This suggests that learning primary features plays a more crucial role in model finetuning. Additionally, we varied the number of adapters between transformer blocks from 1 to 12. Inserting adapters into every block (12 adapters in total) consistently yielded the best performance across all steps. Therefore, a larger number of adapters between transformer blocks leads to better model performance, indicating that increasing the number of adapter connections between transformer blocks positively impacts model outcomes.

\textbf{The influence of adapter connection within transformer blocks.} We also tested different implementations of the Multi-Adapter by inserting adapters within all 12 transformer blocks. The results are reported in Table.\ref{table:adapter_structure}, and more details are presented in Appendix \ref{appendixsec:add-adapter-abla}. Interestingly, we found that in smaller incremental steps, increasing the number of adaptive connections within transformer blocks does not improve model performance; in fact, it can even degrade it. The previously observed positive correlation only occurs in larger incremental steps. This suggests there is a nonlinear correlation between the number of adaptive adapter connections and the model's IL capabilities. This phenomenon can be attributed to the fact that, in smaller incremental steps, the model faces relatively mild distribution shifts between tasks, and excessive adaptive connections may lead to overfitting to the current task, thereby harming knowledge retention from previous tasks. Moreover, adding multiple adapters within the same block can cause redundancy in learned representations, reducing the effective utilization of model capacity. In contrast, for larger incremental steps with more significant distribution changes, the additional adaptive connections provide the flexibility needed to capture new task-specific features, thus improving performance. In SimE, we explore the optimal implementation of adapters across various task configurations.

\subsection{Ablation studies on the influence of CLIP components}
\label{sec:Sasdasd}
\textbf{The influence of pre-trained datasets.} CLIP has attracted significant attention due to its powerful zero-shot capabilities, leading many studies to retrain CLIP from scratch on other image-text pair datasets, such as Datacomp \cite{gadre2024datacomp} and LAION \cite{schuhmann2022laion}. These datasets are comparable or even larger than the original pre-training dataset (WIT-400M). We evaluated their performance on CIFAR-100 and TinyImageNet as shown in Table.\ref{tab:pre-trained datasets}, we also employ t-SNE\cite{van2008visualizing} to visualize classification at the beginning stage shown in Fig.\ref{fig:appendix-tsne}. As shown in Fig.\ref{fig:appendix-tsne}, larger and more diverse pre-training datasets (e.g., LAION-2B) tend to produce embeddings with clearer inter-class boundaries and tighter intra-class clusters, which helps downstream classification. In contrast, models pre-trained on smaller or less diverse datasets may yield overlapping clusters, increasing the difficulty of learning new classes without forgetting. This visual evidence supports the results in Table.\ref{tab:pre-trained datasets} and highlights the dataset scale and diversity in improving IL performance.

\begin{figure}[htb]
\centering
    \begin{minipage}{0.225\textwidth}
        \includegraphics[width=\linewidth]{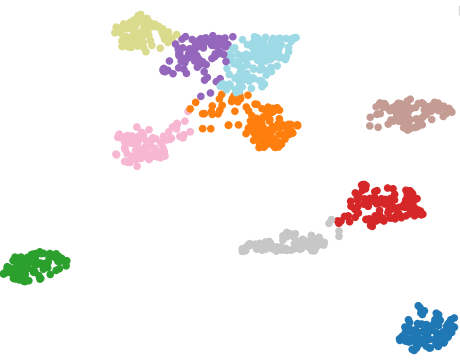}
        \centering {\scriptsize(a)Laion400M}
    \end{minipage}
    \hspace{0.02\textwidth}
    \begin{minipage}{0.225\textwidth}
        \includegraphics[width=\linewidth]{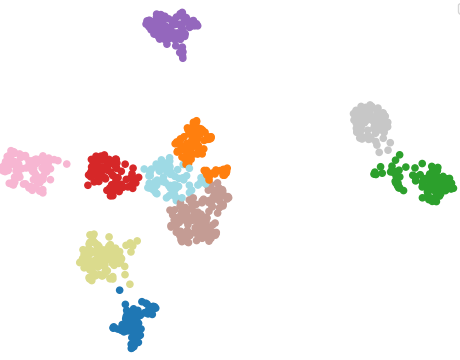}
        \centering {\scriptsize(b)Laion2B}
    \end{minipage}
    \begin{minipage}{0.225\textwidth}
        \includegraphics[width=\linewidth]{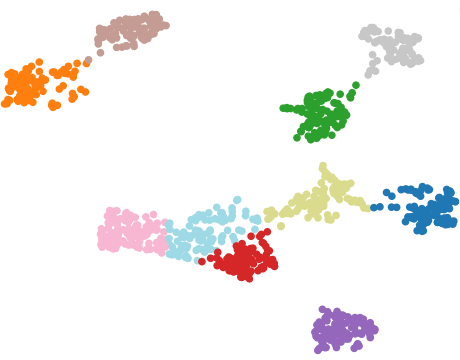}
        \centering {\scriptsize(c)Datacomp1B}
    \end{minipage}
    \hspace{0.02\textwidth}
    \begin{minipage}{0.225\textwidth}
        \includegraphics[width=\linewidth]{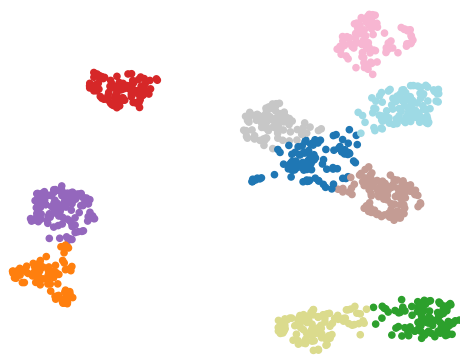}
        \centering {\scriptsize(d)CommonPool1B}
    \end{minipage}
% \vspace{-0.3em}
\caption{The t-SNE visualization of CLIP pre-trained on different datasets. All results are conducted on CIFAR100 with ViT-B/16 as backbone containing 10 classes.}
\label{fig:appendix-tsne}
\end{figure}

\begin{table}[htb]
\caption{The influence of CLIP ViT backbones size. The experiments use the corresponding data preprocessing while "336px" indicates the images are resized to 336. All experiments conducted on CIFAR100 and the best results are colored grey.}
\label{tab:ViT}
\centering
\footnotesize %\tiny \scriptsize \footnotesize \small \normalsize \large \Large
\renewcommand{\arraystretch}{1}
\begin{tabular}{p{2.2cm}p{0.6cm}p{0.6cm}p{0.6cm}p{0.6cm}p{0.6cm}p{0.6cm}}
\toprule
    &\multicolumn{2}{c}{10 steps} &\multicolumn{2}{c}{20 steps} &\multicolumn{2}{c}{50 steps} \\ 
    \bf Blocks & \bf Avg  & \bf Last & \bf Avg & \bf Last & \bf Avg & \bf Last \\ 
    \midrule
    ViT-B/16         & 85.94 & 77.10 & 85.67 & 76.61 & 84.16 & 73.88 \\ 
    ViT-B/32         & 83.60 & 74.43 & 82.02 & 71.74 & 81.18 & 70.06 \\
    ViT-L/14-336px   & 88.53 & 80.85 & 88.02 & 80.45 & 89.12 & 81.65 \\
    ViT-L/14         &\cellcolor{lightgray}88.79 &\cellcolor{lightgray}81.44 &\cellcolor{lightgray}88.57 &\cellcolor{lightgray}81.01 &\cellcolor{lightgray}89.73 &\cellcolor{lightgray}{82.60} \\
    \bottomrule
  \end{tabular}
\end{table}

\textbf{The influence of  ViT backbone size.} In addition to examining pre-trained datasets, we also investigated the impact of different ViT backbones in CLIP. Our default model uses ViT-B/16. As shown in Table.\ref{tab:ViT}, increasing the backbone size significantly improves model performance. Specifically, the accuracy of ViT-L consistently surpasses that of ViT-B across various settings, demonstrating superior feature extraction capabilities. When the model size is held constant, the size of the image patches plays a crucial role in feature extraction, with smaller patch sizes better capturing semantic information. In contrast, the size of the images during pre-processing has a relatively minor impact on model performance.

\section{Conclusion}
In this paper, we propose SimE, a simple yet efficient incremental learning (IL) framework. SimE utilizes a pre-trained model as the encoder and incorporates adapters for finetuning, thereby achieving robust feature extraction capabilities while adapting to IL tasks without the need to store replay data. Our experiments demonstrate that SimE achieves competitive results, validating its effectiveness. To explore better methods of adapter connections, we introduce the Multi-Adapter and observe a remarkable phenomenon: there is a nonlinear correlation between the number of adaptive adapter connections and the model's IL capabilities. Specifically, while increasing the number of adapter connections between transformer blocks positively impacts model performance, adding more adaptive connections within transformer blocks during small incremental steps does not enhance, and may even degrade the model's IL ability. Such improvements occur only at more advanced incremental stages. We also conducted a systematic study  and recommend that SimE's backbone encoder utilize the image encoder from CLIP models pre-trained on larger datasets like LAION-2B and larger architectures such as ViT-L/14 for CIL tasks. In future work, we will explore combining SimE with different pre-trained large models and various types of adapters for other tasks.

%----------------------Bibliography---------------------------
%\clearpage

%----------------------Appendix---------------------------
\clearpage
\appendices
\begin{center}
    \LARGE\textbf{Appendix}
\end{center}

In the Appendix, we provide additional method details and experiment settings mentioned in the main text, as well as experiment results. The contents of supplementary material are organized as follows:
\begin{itemize}
    \item \textbf{Section \ref{appendixsec:meth-SimE-toymodel}} introduce the specific details of the prototype extractor and classifier of SimE, and the realizations of SimE with adapter. We extract the average value of features as prototypes of classes and calculate the similarity between prototypes and features to classify test data. In realizations of SimE, we first build a toy model of SimE and then explore various specific implementations of SimE.
    \item \textbf{Section \ref{appendixsec:optimisation-of-Adapter}} describes the update process of the encoder. The model starts with a pre-trained CLIP encoder, and after finetuning in the Adapter stage, a finetuned encoder is obtained. To maximize the feature extraction capability of the pre-trained model, we also include an additional pre-trained encoder. Together, these two encoders form the encoder of the model.
    \item \textbf{Section \ref{appendixsec:theorem proof}} present the proof session of theorem 1. The parameters are defined as in Sec \ref{sec:theorem introduction}.
    \item \textbf{Section \ref{appendixsec:exp-expsetting}} outlines the various experimental settings, including datasets, model backbone, evaluation metrics, CIL methods for comparison, and the training configurations of the model. In the paper, unless otherwise specified, the experimental setup should be consistent with what is described here.
    \item \textbf{Section \ref{appendixsec:addition-acc-eff}} presents a comparison of our method with other CIL methods in terms of accuracy and efficiency, including accuracy comparisons across various tasks and between different datasets. This serves as a supplement to the experimental results in the main paper.
    \item \textbf{Section \ref{appendixsec:add-adapter-abla}} supplements the results of ablation experiments, including between transformer blocks and within blocks. In these experiments, "CLIP" represents the results when only the pre-trained encoder is used without an adapter.
\end{itemize}

\begin{table*}[htbp]\scriptsize
\centering
\caption{Comparison on the accuracy of different CIL methods. The Average and Last accuracy of different CIL methods on CIFAR100 and ImageNet-100 benchmark. All the methods use CLIP ViT-B/16 as the backbone. In SimE, all results use 12 adapters (one per transformer block). The best results are coloured grey.}
\resizebox{1.0\textwidth}{!}{
\scriptsize
\begin{tabular}{l|cccccc|cccccc}
        \toprule
        & \multicolumn{6}{c|}{CIFAR100} & \multicolumn{6}{c}{ImageNet-100} \\
        & \multicolumn{2}{c}{10 Steps} & \multicolumn{2}{c}{20 Steps} & \multicolumn{2}{c|}{50 Steps}  & \multicolumn{2}{c}{5 Steps} & \multicolumn{2}{c}{10 Steps} & \multicolumn{2}{c}{20 Steps} \\
        Methods & Avg & Last & Avg & Last & Avg & Last & Avg & Last & Avg & Last & Avg & Last \\
        \midrule
        UCIR\cite{hou2019learning}    	& 58.70	& 42.90	& 58.20	& 41.10	&  -	& -	    & 76.00	& 64.00	& 70.50	& 55.30	& 64.70	& 47.80\\
        iCaRL\cite{rebuffi2017icarl}	& 79.35	& 70.97	& 73.32	& 64.55	& 71.28	& 59.07	& -  	& -	    & -	    & -	    & -	    & - \\
        DyTox\cite{douillard2022dytox} & 77.00 & 67.50	& 76.80	& 64.30	& 75.50	& 59.50	& -	    & -	    & 80.80	& 72.50	& -	    & -    \\
        DER\cite{yan2021dynamically}   & 74.64 & 64.35 & 73.98 & 62.55 & 72.05 & 59.76 &   -   &   -   & 76.10 & 66.10 &   -	&   -  \\
        SimE(Ours)	                    &\cellcolor{lightgray} 85.94	&\cellcolor{lightgray} 77.10	&\cellcolor{lightgray} 85.67	&\cellcolor{lightgray} 76.61	&\cellcolor{lightgray} 84.16	&\cellcolor{lightgray} 73.88	&\cellcolor{lightgray} 79.35	&\cellcolor{lightgray} 75.37	&\cellcolor{lightgray} 79.32	&\cellcolor{lightgray} 75.37	&\cellcolor{lightgray} 79.29	&\cellcolor{lightgray} 75.37\\
        \bottomrule
\end{tabular}}
\label{table:same_backbone}  
\end{table*}

\begin{table*}[htbp]\scriptsize
\centering
\caption{The Average and Last accuracy on CIFAR100 and ImageNet-100 benchmark under different classes imbalance ratio $imb\_factor$. $imb\_factor=1$ means every classes have same number of samples. The best results are coloured grey.}
\resizebox{1.0\textwidth}{!}{
\scriptsize
\begin{tabular}{l|cccccc|cccccc}
        \toprule
        & \multicolumn{6}{c|}{CIFAR100} & \multicolumn{6}{c}{ImageNet-100} \\
        & \multicolumn{2}{c}{10 Steps} & \multicolumn{2}{c}{20 Steps} & \multicolumn{2}{c|}{50 Steps}  & \multicolumn{2}{c}{5 Steps} & \multicolumn{2}{c}{10 Steps} & \multicolumn{2}{c}{20 Steps} \\
        $imb\_factor$ & Avg & Last & Avg & Last & Avg & Last & Avg & Last & Avg & Last & Avg & Last \\
        \midrule
        0.01 & 80.73 & 71.12 & 81.42 & 70.05 & 77.10 & 65.64 & 87.70 & 79.96 & 88.82 & 79.68 & 87.87 & 78.74 \\
        0.05 & 84.21 & 75.50 & 83.77 & 73.97 & 80.38 & 69.22 & 88.58 & 80.58 & 89.03 & 80.18 & 88.63 & 79.18\\
        0.1  & 84.70 & 75.98 & 84.10 & 74.57 & 80.73 & 69.35 & 88.92 & 81.04 & 89.32 & 80.50 & 89.28 & 79..82\\
        0.5  & 85.71 & 76.87 & 85.66 & 76.44 & 83.00 & 72.41 & 89.23 & 81.28 & 89.60 & 80.68 & 89.86 & 80.60\\
        1    &\cellcolor{lightgray} 85.94 &\cellcolor{lightgray} 77.10 &\cellcolor{lightgray} 85.67 &\cellcolor{lightgray} 76.61 &\cellcolor{lightgray} 84.16 &\cellcolor{lightgray} 73.88 &\cellcolor{lightgray} 89.31 &\cellcolor{lightgray} 81.32 &\cellcolor{lightgray} 89.77 &\cellcolor{lightgray} 80.94 &\cellcolor{lightgray} 89.90 &\cellcolor{lightgray} 80.64 \\
        \bottomrule
\end{tabular}}
\label{table:class_imbalance}  
\end{table*}

\begin{table*}[htbp] \scriptsize
\caption{Last accuracy of different CIL methods on CIFAR100. The accuracy of Task $t$, $t \in \{1, 2, \ldots, 10 \}$ reported here is the last accuracy over all the previous tasks (i.e., Tasks $1, 2, \ldots, t$). If not otherwise specified, the method uses ResNet as the backbone, where $\dag$ indicates the result based on the CLIP ViT-L/14 pre-trained on Laion-2B. The best results are coloured grey.}
\label{table:Appendixcompacc10stepscafir100}
\centering
\resizebox{1.0\textwidth}{!}{
\begin{tabular}{l|cccccccccc}
  \toprule  
  Method & Task 1  &  Task 2  &  Task 3  & Task 4  & Task 5  & Task 6  & Task 7  & Task 8  &  Task 9  & Task 10  \\
  \hline  
  LwF\cite{li2017learning}      & 89.3 & 70.1 & 54.3 & 45.8 & 39.8 & 36.1 & 31.7 & 28.9 & 24.4 & 23.9 \\
  iCaRL\cite{rebuffi2017icarl}    & 88.7 & 78.1 & 72.4 & 67.2 & 63.7 & 60.2 & 56.4 & 54.4 & 51.9 & 49.5 \\
  iTAML\cite{rajasegaran2020itaml}   & 89.2 & 89.0 & 87.3 & 86.2 & 84.3 & 82.1 & 80.7 & 79.1 & 78.4 & 77.8 \\
  ARI\cite{wang2022anti}     & 88.6 & 86.9 & 85.8 & 84.6 & 83.1 & 81.8 & 81.6 & 81.0 & 80.2 & 80.9 \\
  \midrule 
  CoOp(W ViT-L/14)\cite{zhou2022learning}    & 95.8 & 90.7 & 85.2 & 83.4 & 80.8 & 75.8 & 74.7 & 71.7 & 71.3 & 67.6 \\
  Continual-CLIP(ViT-L/14)\cite{thengane2022clip}  & 96.7 & 92.2 & 86.0 & 80.4 & 77.5 & 75.8 & 73.0 & 71.4 & 69.8 & 66.7 \\
  AttriCLIP(ViT-L/14)\cite{wang2023attriclip}  & 97.8 & 93.7 & 91.0 & 87.5 & 84.7 & 82.5 & 82.3 & 81.9 & 81.7 & 81.4 \\
  \midrule 
  \bf Ours(ViT-B/16 \& AdaptMLP)   & 97.1 & 94.4 & 90.4 & 87.9 & 86.1 & 84.0 & 82.0 & 79.5 & 78.0 & 76.7\\
  \bf Ours(ViT-L/14 \& AdaptMLP)   & 98.2 &95.6  &91.9  & 90.2 & 89.5 & 87.9 & 85.6 & 83.5 & 82.7 & 81.3\\
  \bf Ours(ViT-B/16 \& AdaptAtten) & 97.0 & 94.5 & 90.7 & 88.4 & 86.7 & 84.6 & 82.4 
  & 79.8 & 78.3 & 77.1\\
  \bf Ours(ViT-L/14 \& AdaptAtten) & 98.3 & 95.9 & 92.0 & 90.2 & 89.6 & 88.0 & 85.8 
  & 83.7 & 82.9 & 81.4\\
 \textbf{Ours$^{\dag}$ (ViT-L/14 \& AdaptAtten \& Laion-2B)}   & \cellcolor{lightgray}98.2 
  & \cellcolor{lightgray}96.9 & \cellcolor{lightgray}94.6 & \cellcolor{lightgray}93.1 & \cellcolor{lightgray}92.5 & \cellcolor{lightgray}91.1 
  & \cellcolor{lightgray}89.4 & \cellcolor{lightgray}87.7 & \cellcolor{lightgray}87.2 & \cellcolor{lightgray}86.0\\
  \toprule  
\end{tabular}}
\end{table*}

\begin{figure*}[htbp]
\centering
\begin{minipage}[b]{0.3\textwidth}
  \centering
  \includegraphics[width=\textwidth]{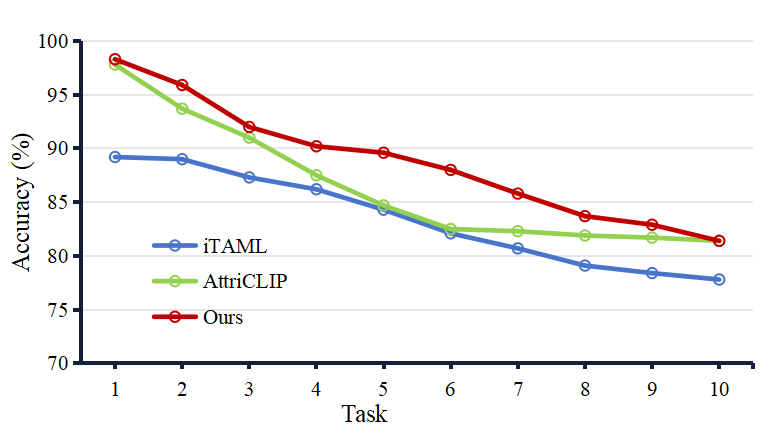}
  %\text{(a)}
\end{minipage}
\begin{minipage}[b]{0.3\textwidth}
  \centering
  \includegraphics[width=\textwidth]{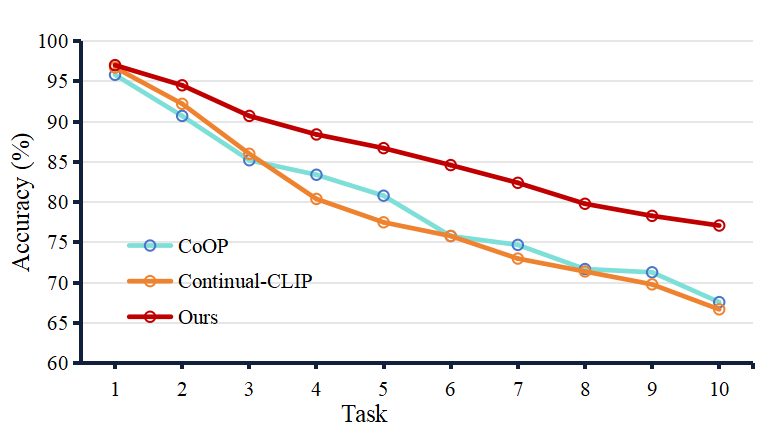}
  %\text{(a)}
\end{minipage}
\begin{minipage}[b]{0.3\textwidth}
  \centering
  \includegraphics[width=\textwidth]{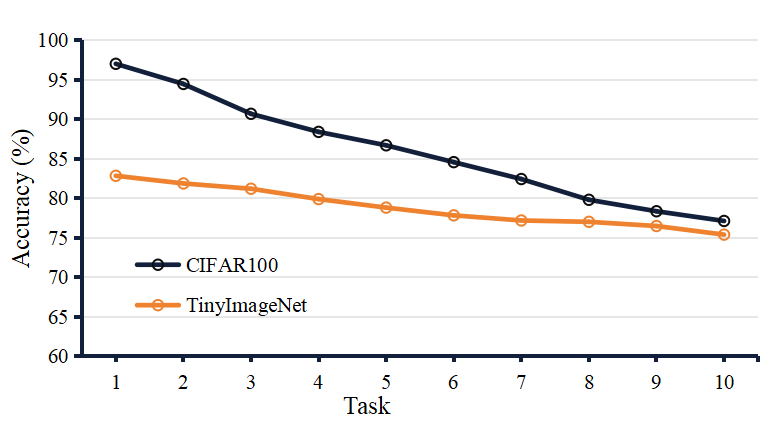}
  %\text{(a)}
\end{minipage}
\caption{Last accuracy of every Task in 10 steps. The Last accuracy of Task $t$, $t \in \{1, 2, . . . , 10 \} $ is the Top-1 accuracy over all the previous Tasks (i.e., Tasks $1, 2, . . . ,t$). The results in \textbf{left} and \textbf{middle} are conducted on CIFAR100. The result of "Ours" in \textbf{left} is based on ViT-L/14 and in \textbf{middle} and \textbf{right} are based on ViT-B/16.}
\label{fig:tentask}
\end{figure*}

\begin{figure*}[htbp]
\centering
    \begin{minipage}{0.4\textwidth}
        \includegraphics[width=\linewidth]{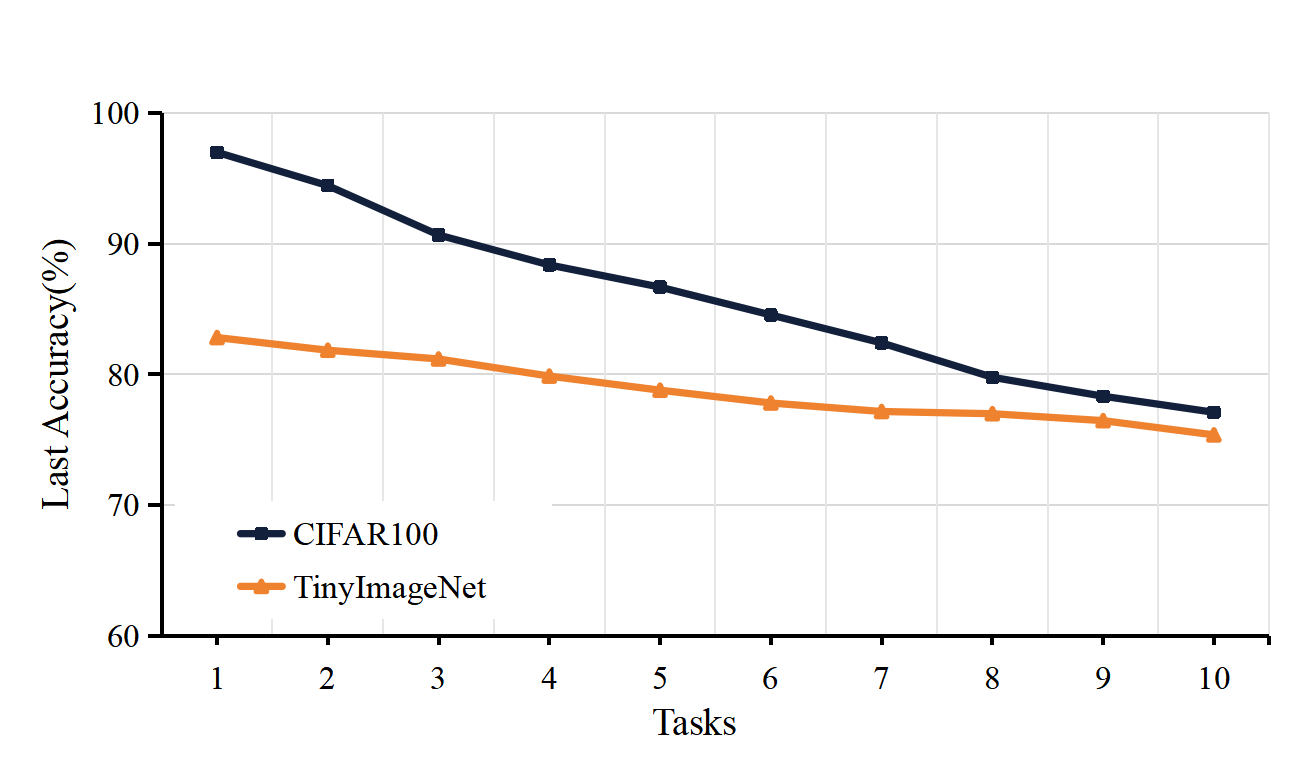}
    \end{minipage}
    \begin{minipage}{0.4\textwidth}
        \includegraphics[width=\linewidth]{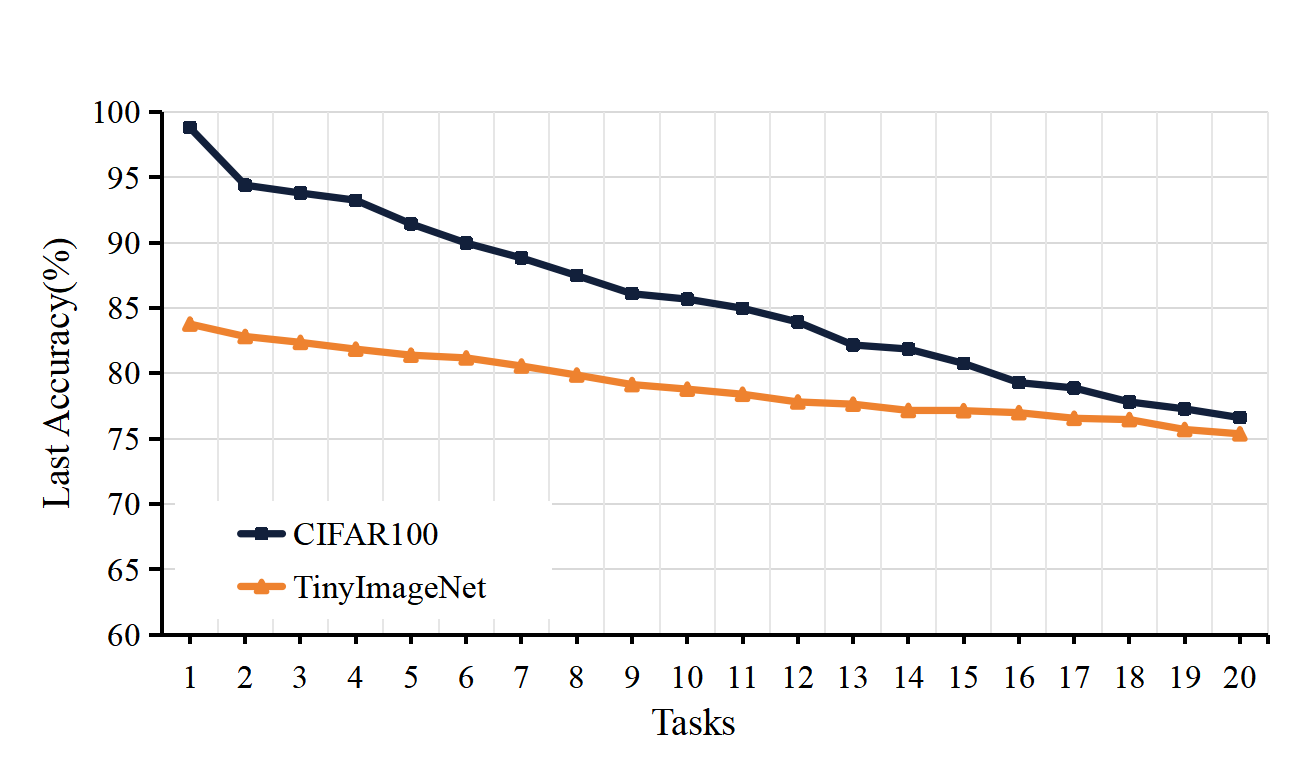}
    \end{minipage}
    \vspace{-0.5em}
    \begin{minipage}{0.4\textwidth}
        \includegraphics[width=\linewidth]{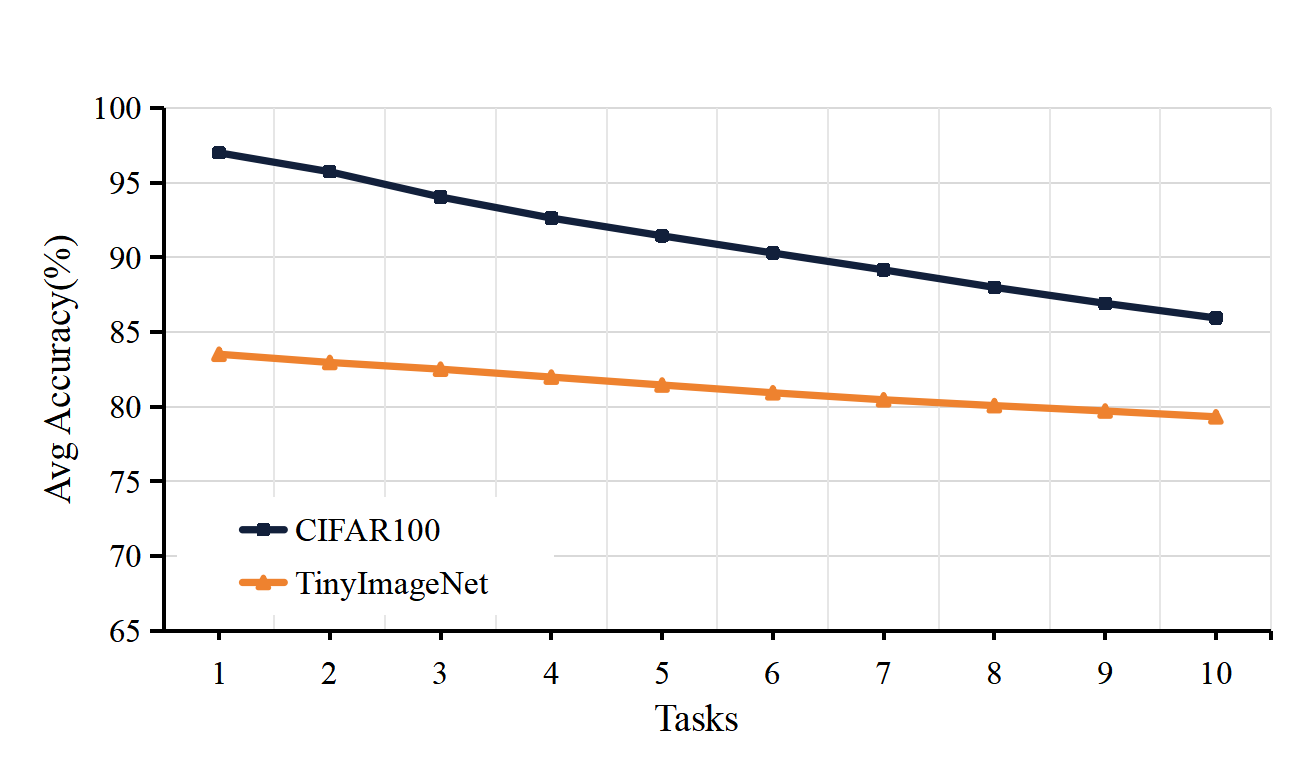}
    \end{minipage}
    \begin{minipage}{0.4\textwidth}
        \includegraphics[width=\linewidth]{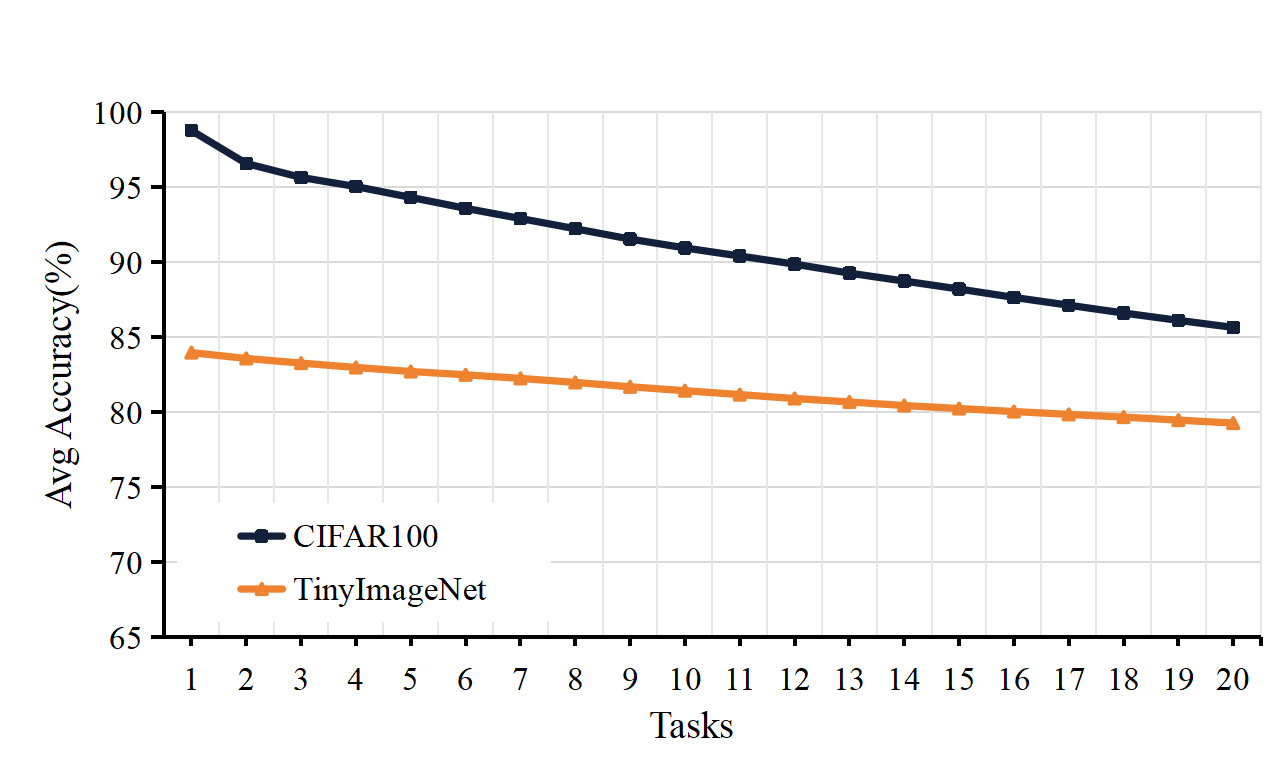}
    \end{minipage}
\caption{Avg accuracy and Last accuracy of every Task on CIFAR100 and TinyImageNet with CLIP ViT-B/16. The Last accuracy of Task $t$, $t \in \{1, 2, . . . , 20 \} $ is the Top-1 accuracy over all the previous Tasks (i.e., Task $1, 2, . . . ,t$). and the Avg accuracy is the average value of Last accuracy over all the previous Tasks. The 100 classes of TinyImageNet are used as base classes.}
\label{fig:Appendixcompacccafir100tinyimagenet}
\end{figure*}

\begin{table*}[htb]\scriptsize
\caption{Average accuracy of different continual learning methods on CIFAR100 with CLIP ViT-B/16. For example, "1-3" signifies that adapters are inserted in the first 3 blocks. The accuracy of Task $t$, $t \in \{1, 2, . . . , 10 \} $ reported here is the Top-1 accuracy over all the previous tasks (i.e., Tasks $1, 2, . . . ,t$). 'CLIP' means no adapter in model.}
\label{table:appendixadapter-bet-block-10steps}
\centering
\resizebox{1.0\textwidth}{!}{
\begin{tabular}{l|cccccccccc}
\toprule  
 Method     &  Task 1  &  Task 2  &  Task 3  & Task 4  & Task 5  & Task 6  & Task 7  & Task 8  &  Task 9  & Task 10  \\
\hline 
CLIP  & 91.60 & 89.60 & 85.30 & 82.60 & 80.48 & 78.13 & 75.17 & 72.65 & 71.30 & 70.08 \\
1-3   & 95.60 & 93.55 & 89.77 & 87.50 & 85.70 & 83.48 & 81.49 & 79.35 & 77.77 & 76.47 \\
1-6   & 95.80 & 93.20 & 89.90 & 87.45 & 85.84 & 83.87 & 81.94 & 79.66 & 78.04 & 76.65 \\
1-9   & 97.00 & 93.95 & 89.87 & 87.40 & 85.64 & 83.30 & 81.17 & 78.91 & 77.28 & 76.00 \\
1-12  & 97.10 & 94.35 & 90.37 & 87.90 & 86.10 & 84.02 & 82.00 & 79.49 & 78.00 & 76.70 \\
4-6   & 95.20 & 92.90 & 89.43 & 87.00 & 85.50 & 83.48 & 81.61 & 79.34 & 77.63 & 76.26 \\
4-9   & 96.60 & 93.80 & 89.73 & 87.15 & 85.30 & 82.90 & 80.89 & 78.62 & 77.16 & 75.60 \\
4-12  & 96.80 & 94.05 & 90.13 & 87.48 & 85.66 & 83.43 & 81.37 & 78.97 & 77.39 & 75.93 \\
7-9   & 95.40 & 92.90 & 89.13 & 86.28 & 84.22 & 81.82 & 79.57 & 76.81 & 75.17 & 73.72 \\
7-12  & 95.40 & 93.25 & 89.60 & 86.55 & 84.66 & 82.25 & 80.06 & 77.26 & 75.81 & 74.31 \\
10-12 & 94.20 & 92.00 & 88.53 & 85.65 & 83.62 & 81.15 & 78.74 & 76.11 & 74.53 & 73.18 \\
\toprule  
\end{tabular}}
\end{table*}

\begin{table}[htb]\scriptsize
\caption{Effect of number and position of adapters loaded in CLIP image encoder. For example, "1-3" signifies that adapters are inserted in the first 3 blocks. All experiments are conducted on CIFAR100 with CLIP ViT-B/16, 'CLIP' means no adapter in model. The best results are coloured gray}
\label{table:appendixadapter-bet-block-all}
\centering
\resizebox{0.48\textwidth}{!}{
\begin{tabular}{l|cccccc}
    \toprule
    &\multicolumn{2}{c}{10 steps} &\multicolumn{2}{c}{20 steps} &\multicolumn{2}{c}{50 steps} \\ 
    \bf Blocks & \bf Avg  & \bf Last & \bf Avg & \bf Last & \bf Avg & \bf Last \\ 
    \midrule
    CLIP & 79.69 & 70.08 & 80.41 & 70.08 & 80.80 & 70.08 \\
    1-3    & 85.07 & 76.47 & 84.76 & 75.66 & 83.10 & 72.86  \\
    1-6    & 85.23 & 76.65 & \cellcolor{lightgray}85.40 & 75.84 & 83.86 & 73.50  \\
    1-9    & 85.05 & 76.00 & 85.26 & 75.70 & 84.01 & 73.66  \\
    1-12   & \cellcolor{lightgray}85.60 & \cellcolor{lightgray}76.70 & 85.30 & \cellcolor{lightgray}76.02 & \cellcolor{lightgray}84.09 & \cellcolor{lightgray}73.77  \\
    4-6    & 84.84 & 76.26 & 85.08 & 75.58 & 82.30 & 71.59  \\
    4-9    & 84.78 & 75.60 & 84.60 & 74.65 & 82.94 & 72.31  \\
    4-12   & 85.12 & 75.93 & 84.81 & 75.03 & 83.09 & 72.46  \\
    7-9    & 83.50 & 73.72 & 83.40 & 73.26 & 81.79 & 71.14  \\
    7-12   & 83.91 & 74.31 & 83.50 & 73.54 & 81.96 & 71.42  \\
    10-12  & 82.77 & 73.18 & 82.86 & 72.89 & 81.66 & 71.16  \\
    \bottomrule
\end{tabular}}
\end{table}

\begin{table}[htb]
\caption{The influence of bottleneck dimension of adapters. All experiments are conducted on CIFAR100 with CLIP ViT-B/16. "Bottleneck" means the  projection dimension of adapters. Other experiment settings are the same as Appendix \ref{appendixsec:exp-expsetting}.}
\label{appendix:tabe-bottleneck}
\centering
\scriptsize
\resizebox{0.48\textwidth}{!}{
\begin{tabular}{c|cccc}
    \toprule
     &\multicolumn{2}{c}{10 steps} &\multicolumn{2}{c}{20 steps} \\
    Bottleneck & \bf Avg  & \bf Last & \bf Avg & \bf Last \\
    \midrule
     1   & 85.94	& 77.61	& 85.33	& 75.98\\
     2   & 86.14	& 77.68	& 85.50	& 76.16\\
     4   & 85.84	& 77.01 & 85.56 & 76.44 \\
     8   & 85.85	& 77.10 & 85.63 & 76.58 \\
     16  & 85.92	& 77.14 & 85.64 & 76.6  \\
     32  & 85.88	& 77.06 & 85.61 & 76.47 \\
     64  & 85.93	& 77.09 & 85.67 & 76.61 \\
     128 & 85.89	& 77.03 & 85.53 & 76.32 \\
     256 & 85.91 & 76.93  & 85.44 & 76.21\\
    \bottomrule
\end{tabular}}
\end{table}

\begin{figure*}[htb]
\centering
    \begin{minipage}{0.3\textwidth}
        \includegraphics[width=\linewidth]{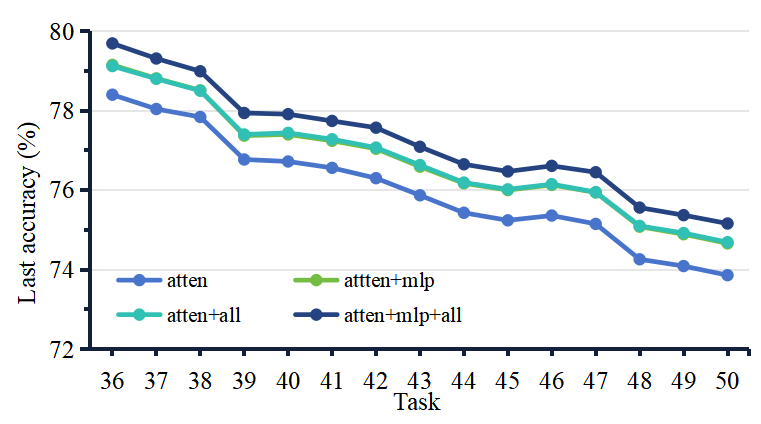}
        %\centering (a)
    \end{minipage}
    \begin{minipage}{0.3\textwidth}
        \includegraphics[width=\linewidth]{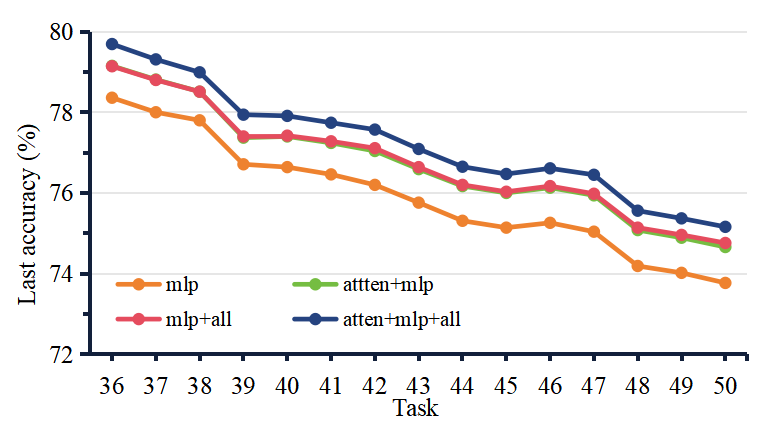}
        %\centering (b)
    \end{minipage}
    \begin{minipage}{0.3\textwidth}
        \includegraphics[width=\linewidth]{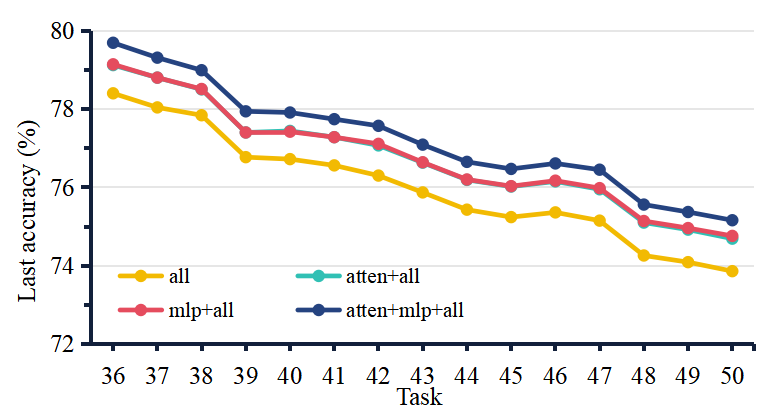}
        %\centering (c)
    \end{minipage}
\caption{The Last accuracy of different Multi-Adapter implementations on classes 70-100 in 50 steps. Every step contains 2 classes, so numbers like "36" means classes 70-72. Experiments are conducted on CIFAR100 with CLIP ViT-B/16 and the classes are tested with the same sequence.}
\label{fig:appendix-70-100acc}
\end{figure*}

\begin{table*}[htb]\scriptsize
\caption{The results of different implementations of Multi-Adapter with the bottleneck dimension being \textbf{1}. The structures of Adapter-MLP, Adapter-Atten and Adapt-All are shown in Fig.\ref{fig:ouradapter}. "Para" refers to trainable parameters, with "M" standing for million. All experiments are conducted on CIFAR100 and the best results are coloured grey}
\label{table:appendix-Multi-imple}
\centering
\tiny %\tiny \scriptsize \footnotesize \small \normalsize \large \Large
\renewcommand{\arraystretch}{0.7}
\resizebox{1.0\textwidth}{!}{
\begin{tabular}{ccc|c|cccccc}
    \toprule
    & & & & \multicolumn{2}{c}{10 steps} &\multicolumn{2}{c}{20 steps} &\multicolumn{2}{c}{50 steps} \\ 
    \bf Adapt-MLP  & \bf Adapt-Atten  & \bf Adapt-All & Para(M) & \bf Avg  & \bf Last & \bf Avg & \bf Last & \bf Avg & \bf Last \\ 
    \midrule
    \textcolor{lightgray}{\XSolidBrush} & \textcolor{lightgray}{\XSolidBrush} & \textcolor{lightgray}{\XSolidBrush} & 0 & 79.69 & 70.08 & 80.41 & 70.08 & 80.80 & 70.08 \\
    \CheckmarkBold & \textcolor{lightgray}{\XSolidBrush}   & \textcolor{lightgray}{\XSolidBrush} & 1.19 & 85.31 & 	76.72 &	85.47	&76.20	&83.93	&73.59\\
    \textcolor{lightgray}{\XSolidBrush}   & \CheckmarkBold & \textcolor{lightgray}{\XSolidBrush} & 1.19 & 85.94 & \cellcolor{lightgray}77.61 &	85.33	&75.98	&83.80	&73.50\\
    \textcolor{lightgray}{\XSolidBrush}   & \textcolor{lightgray}{\XSolidBrush}   & \CheckmarkBold & 1.19 & 85.48 &	76.19 &	85.48	&76.19	&83.93	&73.60\\
    \CheckmarkBold & \CheckmarkBold & \textcolor{lightgray}{\XSolidBrush} & 2.38 &\cellcolor{lightgray}85.98 &	77.41 &\cellcolor{lightgray}85.67\cellcolor{lightgray}&76.42	&84.60	&74.54\\
    \textcolor{lightgray}{\XSolidBrush}   & \CheckmarkBold & \CheckmarkBold & 2.38 & 85.90 &	77.36 &	85.53	&76.29	&84.60	&74.53\\
    \CheckmarkBold & \textcolor{lightgray}{\XSolidBrush}   & \CheckmarkBold & 2.38 & 85.76 &	77.20 & 85.31	&75.90	&84.51	&74.50\\
    \CheckmarkBold & \CheckmarkBold & \CheckmarkBold & 3.57 & 85.92 & 77.25 & 85.02	&75.48 &\cellcolor{lightgray}84.72 &\cellcolor{lightgray}74.80\\
    \bottomrule
\end{tabular}}
\end{table*}

\section{Additional implementation of SimE}
\label{appendixsec:meth-SimE-toymodel}

\textbf{The Prototype extractor.}
In image classification, we follow \cite{snell2017prototypical}, setting the average features of the classes as the weights of the classifier. For the $t$-th task ($t=2,\ldots, T$), we do not update the weights of $E(\bm{x})$ in (\ref{adapter-overallV1}); instead, we use the $E(\bm{x})$ directly to calculate the average value of features and set it as prototypes in datasets $\{ \bm x_i^1, \ldots, \bm x_i^t \}_{i=1}^{n_t}$:
\begin{equation}
    p_k = \frac{1}{K} \sum_{j=1}^{|D^t|} I(y_j = k)E(\bm x),
\end{equation}
where $p_k \in \mathbb{R}^d$ is the prototype of increment class $k$ in $t$-th task, $K = \sum_{j=1}^{\lvert |D^t| \rvert} I( y_j = k)$, $I(\cdot)$ is the indicator function. $p_k$ contains the average features of class $k$, implying that the images of class $k$ should exhibit the greatest similarity with $p_k$ among all prototypes. 

\textbf{The Classifier}.
In CIL, the classifier is dynamic and can be implemented in various ways\cite{wang2023attriclip, mai2021supervised}. In this paper, we use a FC layer as our classifier\cite{snell2017prototypical}. For the $t$-th task, the classifier is an FC layer $W_t \in \mathbb{R}^{D \times (N+M)}$, where $D$ is the feature dimension, $N$ is the number of classes at ($t$-1)-th task, $M$ is the number of increment classes in $t$-th task. We use the training set data $\bm x_{\text{pro}} = \{ (\bm x_i^t, \bm y_i^t) \}_{i=1}^{n_t}$ from the $t$-th task to calculate the prototype $W_{\text{pro}}=\text{mean}(E(\bm x_{\text{pro}}))$, where $n_t$ is the number of samples in the training set of the $t$-th task, then update the FC layer $W_t$: $W_{t}=W_{t-1}+W_{\text{pro}}$, where $W_{t-1} \in \mathbb{R}^{D \times N}$ and $W_{\text{pro}} \in \mathbb{R}^{D \times M}$. The cosine similarity for classification is then calculated as: 
\begin{equation}
\label{eq:simil}
f(\bm{x}) = \left( \frac{\mathbf{W}}{\|\mathbf{W}\|_2} \right)^\top \left( \frac{E(\bm{x})}{\|E(\bm{x})\|_2} \right)
\end{equation}
given that prototype $p_i$ is most similar to instances of class $i$, it is expected that the classifier will assign a higher probability to the correct class label.

\textbf{The toy model}. In SimE, there are numerous implementations for the encoder and adapter. Here, we first employ the CLIP visual encoder as the encoder and AdaptFormer or Multi-Adapter as the adapter to establish a toy model of SimE. Subsequently, we explore various specific implementations of SimE.
\textbf{CLIP} \cite{radford2021learning} is a powerful visual-language contrastive learning model comprising an image encoder and a text encoder. It is trained on 400 million image-text pairs and possesses strong feature extraction capabilities. In this paper, we employ the CLIP pre-trained visual encoder as our backbone, i.e., we instantiate our encoder using the CLIP pre-trained weights $\theta$ and $\phi$. Additionally, there exist CLIP models pre-trained on different datasets, corresponding to different encoder instances. It is noteworthy that during the continual learning process, the pre-trained weights of CLIP are frozen and do not participate in weight updates. 
\textbf{Adaptformer}\cite{chen2022adaptformer} containing a down-projection $W_{\text{down}} \in \mathbb{R}^{D \times R}$ to reduce the feature dimension, a non-linear activation function(ReLU) and an up-projection $W_{\text{up}} \in \mathbb{R}^{R \times D}$ to project the features back to their original dimension, where $D$ is the feature dimension and $R$ is the dimension of bottleneck. The specific form of the AdaptFormer is:
\begin{equation}
d_i(\Tilde{\eta_i},\bm{h}_i)=\alpha\mathrm{ReLU}(\bm{h}_i \cdot W_{\text{down}}) \cdot W_{\text{up}}
\label{eq.11}
\end{equation}
Here $\alpha$ is the scaling factor in the residual connection, which is set to 0.1 by default in AdapterFormer.
\textbf{Multi-Adapter}
is represented by (\ref{adapter-overallV2}). The Adaptformer is a special case in Multi-Adapter and we employ the same down-up projects in (\ref{eq.11}) to initialize the Multi-Adapter.

\section{Additional optimisation of adapter for domain adaptation}
\label{appendixsec:optimisation-of-Adapter}
The CLIP possesses exceptional zero-shot capabilities, enabling it to achieve performance comparable to supervised models on new datasets without finetuning. This underscores CLIP's powerful feature extraction abilities. However, CLIP still faces a domain gap between pre-trained datasets and downstream task datasets. For instance, while CLIP excels on datasets like ImageNet, its performance on MNIST\cite{radford2021learning} is poor. To bridge this gap, it is necessary to finetune CLIP for incremental learning downstream tasks. During the finetuning process, the weights $\theta_i, \phi_i$ of pre-trained CLIP image encoder $E'(\bm{c})$ in (\ref{adapter-overallV1}) are frozen, and only the adapters and classifier are updated:
\begin{equation}
E^{*}(\bm{c}) = F(E'(\bm{c}),D),
\end{equation}
where $E^{*}(x)$ is the adapted CLIP image encoder, $F$ denotes the finetuning process, $ D $ represents the data of the incremental tasks, $\eta$ refers to the trainable parameters, $\eta = \cup_{j=1}^{Z} \tilde{\eta}_{ij} \cup \theta_{W_t}$
Through finetuning, the pre-trained encoder can better adapt to downstream datasets. However, continuously finetuning on a series of tasks $D = \{D_1, \ldots, D_T\}$ would diminish its feature extraction capabilities due to Catastrophic Forgetting. Therefore, in this paper, we finetune the pre-trained encoder only on the first task $D_1$ to maximise the retention of the encoder's previous knowledge:
\begin{equation}
E^{*}(\bm{c}) = F(E'(\bm c_1),D_1) 
\label{eq:finetune1}.
\end{equation}
By finetuning the first task, the encoder can better adapt to downstream datasets. However, the finetuning process will inevitably diminish the zero-shot capabilities of the pre-trained encoder. To better preserve the feature extraction capabilities of the pre-trained encoder, we concatenate the features output by the pre-trained encoder and the finetuned encoder and feed this combined feature set into the classifier for image classification.
\begin{equation}
E^c(\bm{c}) = \{ E^{*}(\bm{c}); E(\bm{c}) \} 
\label{eq:concatenation}
\end{equation}
Here, \( E^c(\bm{c}) \) represents the composite encoder, and \( \{ \cdot; \cdot \} \) denotes the concatenation of the features output by the pre-trained encoder and the finetuned encoder. It is important to note that the concatenation is performed after finetuning, and encoder \( {E}^c(\bm{c}) \) will be frozen in subsequent tasks to maximise the retention of its feature extraction capabilities.

\section{Additional theorem proof}
\label{appendixsec:theorem proof}
\textbf{Proof of theorem \ref{thm:1}}

\noindent $Proof$.\quad For any $\theta_0\in \Theta_0$, there exists a corresponding $\theta_N\in \Theta_{N,\mathrm{loc}}$ 
such that the newly introduced Adapter parameters are set to the ``identity'' or ``off''. Hence
\[
  f\bigl(x;\,\theta_0,\,0,\,\varnothing\bigr)
  \;=\;
  f\bigl(x;\,\theta_N,\,N,\,\mathrm{loc}\bigr)
\]
which implies
\[
  \sup_{\theta_0 \in \Theta_0} f(\theta_0)
  \;\;\le\;\;
  \sup_{\theta_N \in \Theta_{N,\mathrm{loc}}} f(\theta_N)
\]
Similarly, $\Theta_{N,\mathrm{loc}}\subseteq\Theta_{M,\mathrm{loc}}$ yields
\[
  \sup_{\theta_N \in \Theta_{N,\mathrm{loc}}} f(\theta_N)
  \;\;\le\;\;
  \sup_{\theta_M \in \Theta_{M,\mathrm{loc}}} f(\theta_M)
\]
Combining these two inequalities finishes the proof:
\[
  \sup f(\Theta_{M,\mathrm{loc}}) \ge
  \sup f(\Theta_{N,\mathrm{loc}}) \ge \sup f(\Theta_0)
\]

% \textbf{Proof of theorem \ref{thm:2}}

% \noindent $Proof$. \quad Formally, for performance measure function with any parameters 
% \[ f: \Theta_{num,loc} \to \mathbb{R} \]
% It will be affected by both $num$ and $loc$ of adapters. If we consider adapter connect combination methods as \( T(\Theta) \), there is no guarantee that \( T(\Theta) \) will exactly return \( \arg\max_{\Theta} f(\theta) \). 

% As a result, for $M>N$, we only have 
% \[
% %\sup f(\Theta_{M,loc}) \ge \sup f(\Theta_{N,loc}),
% \sup_{\theta\in \Theta_{M,\mathrm{loc}}} f(\theta)>\sup_{\theta\in \Theta_{N,\mathrm{loc}}} f(\theta)
% \]
% but it is possible that 
% \[
% f(T_1(\Theta_M)) < f(T_2(\Theta_N)).
% \]

% This precisely reflects the practical scenario described in Theorem \ref{thm:2}.

\section{Additional experimental setting }
\label{appendixsec:exp-expsetting}
\textbf{Datasets.}
The main experiments are implemented on CIFAR100 and TinyImageNet. CIFAR100\cite{krizhevsky2009learning} consist of 60K images with size of \(32 \times 32\) from 100 classes, which are split into 2 classes, 5 classes and 10 classes in each step. Each class consist of 500 training and 100 testing samples. TinyImageNet\cite{zheng2023preventing}, as a subset of ImageNet, consists of 100K images with size of \(64 \times 64\) from 200 classes and each class consist of 500 training and 50 testing images. In our experiment, the 100 classes of TinyImageNet are split as base classes which are used for finetuning, and the rest 100 classes are split into 5 classes, 10 classes and 20 classes in each step. 

To evaluate scalability and robustness in more realistic and larger-scale scenarios, we further conduct experiments on: CUB200 \cite{wah2011caltech}, containing 11,788 bird images across 200 classes, with resolutions typically exceeding 300 pixels on the long side; ImageNet-R \cite{hendrycks2021many}, which contains 30K images from 200 ImageNet classes rendered in artistic, cartoon, and abstract styles; ImageNet-100, a 100-class subset of ImageNet with images of typical resolution exceeding 500 pixels; VTAB \cite{zhai2019large}, a collection of 19 diverse vision datasets with images resized to 224×224; ImageNet-A \cite{hendrycks2021natural}, an adversarially filtered subset of ImageNet containing 7.5K challenging images with long sides typically above 500 pixels; ObjectNet \cite{barbu2019objectnet}, which has 50K real-world object images with significant viewpoint, background, and rotation variations (long sides usually over 500 pixels); and OmniBenchmark \cite{zhang2022benchmarking}, a broad-spectrum benchmark with diverse image sizes, some exceeding 300×500 pixels. In this paper, we sample 200 classes from ObjectNet and ImageNet-A, and 5 datasets from VTAB, with each containing 10 classes, to conduct CIL experiments. All datasets in Table.\ref{tab:sota} are split into 10 steps, except
ObjectNet, which is split into 20 steps.

\textbf{Network architectures in the SimE.}
We have developed SimE, leveraging CLIP and adapters (Adaptformer \cite{chen2022adaptformer} and Multi-Adapter) for class-incremental learning tasks. Within SimE, the CLIP image processor is utilized for data preprocessing. The image encoder, featuring various backbone sizes such as ViT-B/16, ViT-B/32, and ViT-L/14, is finetuned using adapters across different pre-trained datasets, including WIT-400B, Laion-400M, Laion-2B, Datacomp-1B, and CommonPool-1B. The classifier employs a fully connected (FC) layer, which uses class prototypes as weights.

\textbf{Evaluation metrics.}
Following the methodology of \cite{rebuffi2017icarl}, we assess SimE and compare it with other baseline methods using two metrics: Average Accuracy (\textbf{Avg}) and Last Accuracy (\textbf{Last}). \textbf{Avg} represents the mean of the Top-1 accuracy for every task, while \textbf{Last} denotes the Top-1 accuracy of final task. Mathematically, for the $t$-th task, Average Accuracy is calculated as follows: \textbf{Avg}$ = \frac{1}{t}\sum_{i=1}^{t}$\textbf{Last}$_i$.

\textbf{Others CIL methods.} 
We compare the SimE with existing CLIP-based methods (e.g., CoOp\cite{zhou2022learning}, ZSCL\cite{zheng2023preventing}, Continual-CLIP\cite{thengane2022clip}, AttriCLIP\cite{wang2023attriclip}, Boosting-CL\cite{yu2024boosting}, L2P\cite{wang2020learn}, DualPrompt\cite{wang2022dualprompt}, CODA-Prompt\cite{smith2023coda}, APER\cite{zhou2025revisiting}) and typical continual learning methods (e.g., LwF\cite{li2017learning}, iCaRL\cite{rebuffi2017icarl}, DER\cite{yan2021dynamically}, iTAML\cite{rajasegaran2020itaml}, ARI\cite{wang2022anti}, UCIR\cite{hou2019learning}, PASS\cite{zhu2021prototype}, and DyTox\cite{douillard2022dytox}).

\textbf{Training procedures.} 
In this study, our experiments utilize the image encoder from CLIP\cite{radford2021learning}, and all experiments use 12 adapters (one per transformer block) unless stated otherwise. We finetune the SimE over 20 epochs on the first task for every dataset. Subsequently, all model weights, except the classifier, remain unchanged. During finetuning, we employ Stochastic Gradient Descent (SGD) as the optimizer. The starting learning rate is set at 0.01, adhering to a cosine decay schedule. We apply a weight decay of 0.0005, a batch size of 64, and the adapter’s bottleneck dimension is set to 64.

\section{Additional comparison on accuracy}
\label{appendixsec:addition-acc-eff}
In this section, we present additional accuracy comparisons of different CIL methods and datasets. To ensure fairness in backbone comparisons, we evaluated all methods using the same CLIP ViT-B/16 backbone to maintain a consistent basis for comparison. As shown in Table.\ref{table:same_backbone}, our method consistently outperforms the baselines in most cases, even under identical backbone settings. This confirms that the observed performance gains are not solely attributable to the backbone capacity, but rather to the effectiveness of our proposed learning strategy.

Class imbalance is a significant challenge in real-world continual learning scenarios. To investigate its impact, we conducted additional experiments under imbalanced settings. The results are reported in Table.\ref{table:class_imbalance}, where $imb\_factor$ denotes the imbalance ratio. Specifically, the number of samples for class $i$ is computed as $max\_num * (imb\_factor^{(i / (num\_classes)})$
where $max\_num$ is the original number of samples per class in the balanced setting, and $num\_classes$ is the total number of classes. The results in Table.\ref{table:class_imbalance} show that class imbalance indeed leads to a substantial drop in performance. In particular, when the imbalance factor decreases from 1 to 0.01 (i.e., from a fully balanced distribution to a highly imbalanced one), the model’s accuracy drops by up to 7.06\%. This confirms that class imbalance poses a considerable challenge for incremental learning models.

Besides, Table.\ref{table:Appendixcompacc10stepscafir100} and Fig.\ref{fig:tentask} show the Last accuracy for each task over 10 steps on CIFAR-100, compared with other CIL methods. We also compare the performance of our method on CIFAR-100 and TinyImageNet over 10 and 20 steps, as reported in Fig.\ref{fig:Appendixcompacccafir100tinyimagenet}.

\section{Additional influence of adapter components in SimE via ablation studies}
\label{appendixsec:add-adapter-abla}

In this section, we report additional experiments on the influence of adapter components in SimE. We first study the influence of adapter connections between transformer blocks and report it in Table.\ref{table:appendixadapter-bet-block-10steps} \& Table.\ref{table:appendixadapter-bet-block-all}, where "CLIP" indicates no adapter inserted in transformer blocks. All the experiments are conducted on CIFAR100 with CLIP ViT-B/16.

We also investigate the influence of adapter connections within transformer blocks, as illustrated in Fig.\ref{fig:appendix-70-100acc} \&Table.\ref{table:appendix-Multi-imple} \&Table.\ref{appendix:tabe-bottleneck}. Fig.\ref{fig:appendix-70-100acc} presents the model performance on classes 70-100 during advanced steps (50 steps) with various Multi-Adapter implementations. Table.\ref{table:appendix-Multi-imple} reports the results of different Multi-Adapter implementations across all steps, with the first row indicating the absence of an adapter in the encoder. Furthermore, we examine the influence of adapter's bottleneck dimension within the Multi-Adapter framework and report results in Table.\ref{appendix:tabe-bottleneck} with experiments conducted on Adapt-Atten.


\begin{thebibliography}{34}

\bibitem{goodfellow2013empirical}I. J. Goodfellow, M. Mirza, D. Xiao, A. Courville, and Y. Bengio, ``An empirical investigation of catastrophic forgetting in gradient-based neural networks,'' \emph{arXiv preprint arXiv:1312.6211}, 2013.

\bibitem{de2021continual}M. De Lange, R. Aljundi, M. Masana, S. Parisot, X. Jia, A. Leonardis, G. Slabaugh, and T. Tuytelaars, ``A continual learning survey: Defying forgetting in classification tasks,'' \emph{IEEE Trans. Pattern Anal. Mach. Intell.}, vol. 44, no. 7, pp. 3366--3385, Jul. 2021.

\bibitem{masana2022class}M. Masana, X. Liu, B. Twardowski, M. Menta, A. D. Bagdanov, and J. Van De Weijer, ``Class-incremental learning: survey and performance evaluation on image classification,'' \emph{IEEE Trans. Pattern Anal. Mach. Intell.}, vol. 45, no. 5, pp. 5513--5533, May 2022.

\bibitem{li2017learning}Z. Li and D. Hoiem, ``Learning without forgetting,'' \emph{IEEE Trans. Pattern Anal. Mach. Intell.}, vol. 40, no. 12, pp. 2935--2947, Dec. 2017.

\bibitem{serra2018overcoming}J. Serra, D. Suris, M. Miron, and A. Karatzoglou, ``Overcoming catastrophic forgetting with hard attention to the task,'' in \emph{Proc. Int. Conf. Mach. Learn.}, 2018, pp. 4548--4557.

\bibitem{rebuffi2017icarl}S.-A. Rebuffi, A. Kolesnikov, G. Sperl, and C. H. Lampert, ``iCaRL: Incremental classifier and representation learning,'' in \emph{Proc. IEEE Conf. Comput. Vis. Pattern Recog.}, 2017, pp. 2001--2010.

\bibitem{radford2021learning}A. Radford, J. W. Kim, C. Hallacy, A. Ramesh, G. Goh, S. Agarwal, G. Sastry, A. Askell, P. Mishkin, J. Clark, and others, ``Learning transferable visual models from natural language supervision,'' in \emph{Proc. Int. Conf. Mach. Learn.}, 2021, pp. 8748--8763.

\bibitem{thengane2022clip}V. Thengane, S. Khan, M. Hayat, and F. Khan, ``CLIP model is an efficient continual learner,'' \emph{arXiv preprint arXiv:2210.03114}, 2022.

\bibitem{ding2022don}Y. Ding, L. Liu, C. Tian, J. Yang, and H. Ding, ``Don't stop learning: Towards continual learning for the CLIP model,'' \emph{arXiv preprint arXiv:2207.09248}, 2022.

\bibitem{zheng2023preventing}Z. Zheng, M. Ma, K. Wang, Z. Qin, X. Yue, and Y. You, ``Preventing zero-shot transfer degradation in continual learning of vision-language models,'' in \emph{Proc. IEEE/CVF Int. Conf. Comput. Vis.}, 2023, pp. 19125--19136.

\bibitem{zhou2022learning}K. Zhou, J. Yang, C. C. Loy, and Z. Liu, ``Learning to prompt for vision-language models,'' \emph{Int. J. Comput. Vis.}, vol. 130, no. 9, pp. 2337--2348, Sep. 2022.

\bibitem{wang2023attriclip}R. Wang, X. Duan, G. Kang, J. Liu, S. Lin, S. Xu, J. L{\"u}, and B. Zhang, ``AttriCLIP: A non-incremental learner for incremental knowledge learning,'' in \emph{Proc. IEEE/CVF Conf. Comput. Vis. Pattern Recog.}, 2023, pp. 3654--3663.

\bibitem{yu2024boosting}J. Yu, Y. Zhuge, L. Zhang, D. Wang, H. Lu, and Y. He, ``Boosting Continual Learning of Vision-Language Models via Mixture-of-Experts Adapters,'' \emph{arXiv preprint arXiv:2403.11549}, 2024.

\bibitem{houlsby2019parameter}N. Houlsby, A. Giurgiu, S. Jastrzebski, B. Morrone, Q. De Laroussilhe, A. Gesmundo, M. Attariyan, and S. Gelly, ``Parameter-efficient transfer learning for NLP,'' in \emph{Proc. Int. Conf. Mach. Learn.}, 2019, pp. 2790--2799.

\bibitem{chen2022adaptformer}S. Chen, C. Ge, Z. Tong, J. Wang, Y. Song, J. Wang, and P. Luo, ``Adaptformer: Adapting vision transformers for scalable visual recognition,'' \emph{Adv. Neural Inform. Process. Syst.}, vol. 35, pp. 16664--16678, 2022.

\bibitem{gadre2024datacomp}S. Y. Gadre, G. Ilharco, A. Fang, J. Hayase, G. Smyrnis, T. Nguyen, R. Marten, M. Wortsman, D. Ghosh, J. Zhang, and others, ``Datacomp: In search of the next generation of multimodal datasets,'' \emph{Adv. Neural Inform. Process. Syst.}, vol. 36, 2024.

\bibitem{cherti2023reproducible}M. Cherti, R. Beaumont, R. Wightman, M. Wortsman, G. Ilharco, C. Gordon, C. Schuhmann, L. Schmidt, and J. Jitsev, ``Reproducible scaling laws for contrastive language-image learning,'' in \emph{Proc. IEEE/CVF Conf. Comput. Vis. Pattern Recog.}, 2023, pp. 2818--2829.

\bibitem{aljundi2018memory}R. Aljundi, F. Babiloni, M. Elhoseiny, M. Rohrbach, and T. Tuytelaars, ``Memory aware synapses: Learning what (not) to forget,'' in \emph{Proc. Eur. Conf. Comput. Vis.}, 2018, pp. 139--154.

\bibitem{kirkpatrick2017overcoming}J. Kirkpatrick, R. Pascanu, N. Rabinowitz, J. Veness, G. Desjardins, A. A. Rusu, K. Milan, J. Quan, T. Ramalho, A. Grabska-Barwinska, and others, ``Overcoming catastrophic forgetting in neural networks,'' \emph{Proc. Natl. Acad. Sci.}, vol. 114, no. 13, pp. 3521--3526, Mar. 2017.

\bibitem{mallya2018packnet}A. Mallya and S. Lazebnik, ``Packnet: Adding multiple tasks to a single network by iterative pruning,'' in \emph{Proc. IEEE Conf. Comput. Vis. Pattern Recog.}, 2018, pp. 7765--7773.

\bibitem{wang2020learn}Z. Wang, T. Jian, K. Chowdhury, Y. Wang, J. Dy, and S. Ioannidis, ``Learn-prune-share for lifelong learning,'' in \emph{Proc. IEEE Int. Conf. Data Min.}, 2020, pp. 641--650.

\bibitem{buzzega2020dark}P. Buzzega, M. Boschini, A. Porrello, D. Abati, and S. Calderara, ``Dark experience for general continual learning: a strong, simple baseline,'' \emph{Adv. Neural Inform. Process. Syst.}, vol. 33, pp. 15920--15930, 2020.

\bibitem{cha2021co2l}H. Cha, J. Lee, and J. Shin, ``Co2l: Contrastive continual learning,'' in \emph{Proc. IEEE/CVF Int. Conf. Comput. Vis.}, 2021, pp. 9516--9525.

\bibitem{wang2022learning}Z. Wang, Z. Zhang, C.-Y. Lee, H. Zhang, R. Sun, X. Ren, G. Su, V. Perot, J. Dy, and T. Pfister, ``Learning to prompt for continual learning,'' in \emph{Proc. IEEE/CVF Conf. Comput. Vis. Pattern Recog.}, 2022, pp. 139--149.

\bibitem{wang2022dualprompt}Z. Wang, Z. Zhang, S. Ebrahimi, R. Sun, H. Zhang, C.-Y. Lee, X. Ren, G. Su, V. Perot, J. Dy, and others, ``Dualprompt: Complementary prompting for rehearsal-free continual learning,'' in \emph{Proc. Eur. Conf. Comput. Vis.}, 2022, pp. 631--648.

\bibitem{dong2024efficient}W. Dong, D. Yan, Z. Lin, and P. Wang, ``Efficient adaptation of large vision transformer via adapter re-composing,'' \emph{Adv. Neural Inform. Process. Syst.}, vol. 36, 2024.

\bibitem{gao2024clip}P. Gao, S. Geng, R. Zhang, T. Ma, R. Fang, Y. Zhang, H. Li, and Y. Qiao, ``Clip-adapter: Better vision-language models with feature adapters,'' \emph{Int. J. Comput. Vis.}, vol. 132, no. 2, pp. 581--595, 2024.

\bibitem{shi2024clip}K. Shi, J. Lu, Z. Fang, and G. Zhang, ``Clip-enhanced unsupervised domain adaptation with consistency regularization,'' in \emph{Proc. IEEE Int. Joint Conf. Neural Netw.}, 2024, pp. 1--8.

\bibitem{liu2023class}X. Liu, X. Cao, H. Lu, J.-W. Xiao, A. D. Bagdanov, and M.-M. Cheng, ``Class Incremental Learning with Pre-trained Vision-Language Models,'' \emph{arXiv preprint arXiv:2310.20348}, 2023.

\bibitem{ermis2022memory}B. Ermis, G. Zappella, M. Wistuba, A. Rawal, and C. Archambeau, ``Memory efficient continual learning with transformers,'' \emph{Adv. Neural Inform. Process. Syst.}, vol. 35, pp. 10629--10642, 2022.

\bibitem{ermis2022continual}B. Ermis, G. Zappella, M. Wistuba, A. Rawal, and C. Archambeau, ``Continual learning with transformers for image classification,'' in \emph{Proc. IEEE/CVF Conf. Comput. Vis. Pattern Recog.}, 2022, pp. 3774--3781.

\bibitem{hou2019learning}S. Hou, X. Pan, C. C. Loy, Z. Wang, and D. Lin, ``Learning a unified classifier incrementally via rebalancing,'' in \emph{Proc. IEEE/CVF Conf. Comput. Vis. Pattern Recog.}, 2019, pp. 831--839.

\bibitem{zhu2021prototype}F. Zhu, X.-Y. Zhang, C. Wang, F. Yin, and C.-L. Liu, ``Prototype augmentation and self-supervision for incremental learning,'' in \emph{Proc. IEEE/CVF Conf. Comput. Vis. Pattern Recog.}, 2021, pp. 5871--5880.

\bibitem{douillard2022dytox}A. Douillard, A. Ramé, G. Couairon, and M. Cord, ``Dytox: Transformers for continual learning with dynamic token expansion,'' in \emph{Proc. IEEE/CVF Conf. Comput. Vis. Pattern Recog.}, 2022, pp. 9285--9295.

\bibitem{yan2021dynamically}S. Yan, J. Xie, and X. He, ``DER: Dynamically expandable representation for class incremental learning,'' in \emph{Proc. IEEE/CVF Conf. Comput. Vis. Pattern Recog.}, 2021, pp. 3014--3023.

\bibitem{zhou2025revisiting}D.-W. Zhou, Z.-W. Cai, H.-J. Ye, D.-C. Zhan, and Z. Liu, ``Revisiting class-incremental learning with pre-trained models: Generalizability and adaptivity are all you need,'' \emph{Int. J. Comput. Vis.}, vol. 133, no. 3, pp. 1012--1032, 2025.

\bibitem{kang2025advancing}Z. Kang, L. Wang, X. Zhang, and K. Alahari, ``Advancing prompt-based methods for replay-independent general continual learning,'' \emph{arXiv preprint arXiv:2503.00677}, 2025.

\bibitem{schuhmann2022laion}C. Schuhmann, R. Beaumont, R. Vencu, C. Gordon, R. Wightman, M. Cherti, T. Coombes, A. Katta, C. Mullis, M. Wortsman, and others, ``Laion-5b: An open large-scale dataset for training next generation image-text models,'' \emph{Adv. Neural Inform. Process. Syst.}, vol. 35, pp. 25278--25294, 2022.

\bibitem{van2008visualizing}L. Van der Maaten and G. Hinton, ``Visualizing data using t-SNE,'' \emph{J. Mach. Learn. Res.}, vol. 9, no. 11, pp. 2579--2605, 2008.

\bibitem{mai2021supervised}Z. Mai, R. Li, H. Kim, and S. Sanner, ``Supervised contrastive replay: Revisiting the nearest class mean classifier in online class-incremental continual learning,'' in \emph{Proc. IEEE/CVF Conf. Comput. Vis. Pattern Recog.}, 2021, pp. 3589--3599.

\bibitem{snell2017prototypical}J. Snell, K. Swersky, and R. Zemel, ``Prototypical networks for few-shot learning,'' \emph{Adv. Neural Inform. Process. Syst.}, vol. 30, 2017.

\bibitem{krizhevsky2009learning}A. Krizhevsky, G. Hinton, and others, ``Learning multiple layers of features from tiny images,'' 2009.

\bibitem{wah2011caltech}C. Wah, S. Branson, P. Welinder, P. Perona, and S. Belongie, ``The caltech-ucsd birds-200-2011 dataset,'' California Institute of Technology, 2011.

\bibitem{hendrycks2021many}D. Hendrycks, S. Basart, N. Mu, S. Kadavath, F. Wang, E. Dorundo, R. Desai, T. Zhu, S. Parajuli, M. Guo, et al., ``The many faces of robustness: A critical analysis of out-of-distribution generalization,'' in \emph{Proc. IEEE Int. Conf. Comput. Vis.}, 2021, pp. 8340--8349.

\bibitem{zhai2019large}X. Zhai, J. Puigcerver, A. Kolesnikov, P. Ruyssen, C. Riquelme, M. Lucic, J. Djolonga, A. S. Pinto, M. Neumann, A. Dosovitskiy, et al., ``A large-scale study of representation learning with the visual task adaptation benchmark,'' \emph{arXiv preprint arXiv:1910.04867}, 2019.

\bibitem{hendrycks2021natural}D. Hendrycks, K. Zhao, S. Basart, J. Steinhardt, and D. Song, ``Natural adversarial examples,'' in \emph{Proc. IEEE Conf. Comput. Vis. Pattern Recognit.}, 2021, pp. 15262--15271.

\bibitem{barbu2019objectnet}A. Barbu, D. Mayo, J. Alverio, W. Luo, C. Wang, D. Gutfreund, J. Tenenbaum, and B. Katz, ``Objectnet: A large-scale bias-controlled dataset for pushing the limits of object recognition models,'' \emph{Adv. Neural Inf. Process. Syst.}, vol. 32, 2019.

\bibitem{zhang2022benchmarking}Y. Zhang, Z. Yin, J. Shao, and Z. Liu, ``Benchmarking omni-vision representation through the lens of visual realms,'' in \emph{Proc. Eur. Conf. Comput. Vis.}, 2022, pp. 594--611.

\bibitem{deng2009imagenet}J. Deng, W. Dong, R. Socher, L.-J. Li, K. Li, and L. Fei-Fei, ``Imagenet: A large-scale hierarchical image database,'' in \emph{Proc. IEEE Conf. Comput. Vis. Pattern Recognit.}, 2009, pp. 248--255.

\bibitem{smith2023coda}J. S. Smith, L. Karlinsky, V. Gutta, P. Cascante-Bonilla, D. Kim, et al., ``Coda-prompt: Continual decomposed attention-based prompting for rehearsal-free continual learning,'' in \emph{Proc. IEEE Conf. Comput. Vis. Pattern Recog.}, 2023, pp. 11909--11919.

\bibitem{rajasegaran2020itaml}J. Rajasegaran, S. Khan, M. Hayat, F. S. Khan, and M. Shah, ``iTAML: An incremental task-agnostic meta-learning approach,'' in \emph{Proc. IEEE/CVF Conf. Comput. Vis. Pattern Recog.}, 2020, pp. 13588--13597.

\bibitem{wang2022anti}R. Wang, Y. Bao, B. Zhang, J. Liu, W. Zhu, and G. Guo, ``Anti-retroactive interference for lifelong learning,'' in \emph{Proc. Eur. Conf. Comput. Vis.}, 2022, pp. 163--178.

\end{thebibliography}
\end{document}